\definecolor{mydarkred}{rgb}{0.6,0,0}
\definecolor{mydarkgreen}{rgb}{0,0.6,0}
\newtheorem{theorem}{Theorem}
\newtheorem{definition}[theorem]{Definition}
\newtheorem{lemma}[theorem]{Lemma}
\setlist{nosep,leftmargin=2em}
\DeclareMathOperator{\sign}{\mathrm{sign}}
\DeclareMathOperator*{\argmin}{\mathrm{arg\,min}}
\newcommand{\T}{{\hspace{-0.25ex}\top\hspace{-0.25ex}}}
\newcommand{\pr}{\mathrm{Pr}}
\newcommand{\bE}{\mathbb{E}}
\newcommand{\bR}{\mathbb{R}}
\newcommand{\cG}{\mathcal{G}}
\newcommand{\cO}{\mathcal{O}}
\newcommand{\cX}{\mathcal{X}}
\newcommand{\fR}{\mathfrak{R}}
\newcommand{\prp}{p_\mathrm{p}}
\newcommand{\prn}{p_\mathrm{n}}
\newcommand{\ptr}{p_\mathrm{tr}}
\newcommand{\pip}{\pi_\mathrm{p}}
\newcommand{\Ep}{\bE_\mathrm{p}}
\newcommand{\En}{\bE_\mathrm{n}}
\newcommand{\Etp}{\bE_{\ptr}}
\newcommand{\Etn}{\bE_{\ptr'}}
\newcommand{\Xtr}{\cX_\mathrm{tr}}
\newcommand{\Xp}{\cX_\mathrm{p}}
\newcommand{\Xn}{\cX_\mathrm{n}}
\newcommand{\hRpn}{\widehat{R}_\mathrm{pn}}
\newcommand{\hRuu}{\widehat{R}_\mathrm{uu}}
\newcommand{\hRuus}{\widehat{R}_\mathrm{uu}^\mathrm{Sym}}
\newcommand{\hRpu}{\widehat{R}_\mathrm{pu}}
\newcommand{\hgpn}{\widehat{g}_\mathrm{pn}}
\newcommand{\hguu}{\widehat{g}_\mathrm{uu}}
\newcommand{\barell}{\bar{\ell}}
\newcommand{\barellpos}{\bar{\ell}_\mathrm{+}}
\newcommand{\barellneg}{\bar{\ell}_\mathrm{-}}
\newcommand{\ellsig}{\ell_\mathrm{sig}}
\newcommand{\elllog}{\ell_\mathrm{log}}
\title{On the Minimal Supervision for Training Any Binary Classifier from Only Unlabeled Data}
\author{Nan Lu$^{1,2}$\qquad
    Gang Niu$^2$\qquad
    Aditya Krishna Menon$^3$\qquad
    Masashi Sugiyama$^{2,1}$\\[1ex]
    ${}^1$The University of Tokyo, Tokyo 113-0033, Japan\\
    ${}^2$RIKEN, Tokyo 103-0027, Japan\\
    ${}^3$Google, New York, NY 10018, USA\\[1ex]
    \texttt{lu@ms.k.u-tokyo.ac.jp, gang.niu@riken.jp,} \\
    \texttt{adityakmenon@google.com, sugi@k.u-tokyo.ac.jp}
}
\begin{document}
\maketitle

\begin{abstract}
\emph{Empirical risk minimization}~(ERM), with proper loss function and regularization, is the common practice of supervised classification. In this paper, we study training \emph{arbitrary} (from linear to deep) \emph{binary classifier} from \emph{only unlabeled}~(U) \emph{data} by ERM. We prove that it is impossible to estimate the risk of an arbitrary binary classifier in an \emph{unbiased} manner given a single set of U data, but it becomes possible given two sets of U data with \emph{different class priors}. These two facts answer a fundamental question---what the minimal supervision is for training any binary classifier from only U data. Following these findings, we propose an ERM-based learning method from two sets of U data, and then prove it is \emph{consistent}. Experiments demonstrate the proposed method could train deep models and outperform state-of-the-art methods for learning from two sets of U data.
\end{abstract}

\section{Introduction}

With some properly chosen loss function \citep[e.g.,][]{bartlett06jasa,tewari07jmlr,reid10jmlr} and regularization \citep[e.g.,][]{tikhonov43dansssr,srivastava14jmlr}, \emph{empirical risk minimization}~(ERM) is the common practice of supervised classification \citep{vapnik98SLT}. Actually, ERM is used in not only supervised learning but also \emph{weakly-supervised learning}. For example, in \emph{semi-supervised learning} \citep{chapelle06SSL}, we have very limited labeled~(L) data and a lot of unlabeled~(U) data, where L data share the same form with supervised learning. Thus, it is easy to estimate the risk from only L data in order to carry out ERM, and U data are needed exclusively in regularization \citep[including but not limited to][]{grandvalet04nips,belkin06jmlr,mann07icml,niu13icml,miyato16iclr,laine17iclr,tarvainen17nips,luoi18cvpr,kamnitsas18icml}.

Nevertheless, L data may differ from supervised learning in not only the amount but also the form. For instance, in \emph{positive-unlabeled learning} \citep{elkan08kdd,ward09biometrics}, all L data are from the positive class, and due to the lack of L data from the negative class it becomes impossible to estimate the risk from only L data. To this end, a two-step approach to ERM has been considered \citep{christo14nips,christo15icml,niu16nips,kiryo17nips}. Firstly, the risk is rewritten into an equivalent expression, such that it just involves the same distributions from which L and U data are sampled---this step leads to certain risk estimators. Secondly, the risk is estimated from both L and U data, and the resulted empirical training risk is minimized \citep[e.g.\ by][]{robbins51ams,kingma15iclr}. In this two-step approach, U data are needed absolutely in ERM itself. This indicates that \emph{risk rewrite} (i.e., the technique of making the risk estimable from observable data via an equivalent expression) enables ERM in positive-unlabeled learning and is the key of success.

One step further from positive-unlabeled learning is \emph{learning from only U data} without any L data. This is significantly harder than previous learning problems (cf.~Figure~\ref{fig:illustration}). However, we would still like to train \emph{arbitrary binary classifier}, in particular, deep networks \citep{goodfellow16DL}. Note that for this purpose clustering is suboptimal for two major reasons. First, successful translation of clusters into meaningful classes completely relies on the critical assumption that \emph{one cluster exactly corresponds to one class}, and hence even perfect clustering might still result in poor classification. Second, clustering must introduce additional geometric or information-theoretic assumptions upon which the learning objectives of clustering are built \citep[e.g.,][]{xu04nips,gomes10nips}. As a consequence, we prefer ERM to clustering and then no more assumption is required.

The difficulty is how to estimate the risk from only U data, and our solution is again ERM-enabling risk rewrite in the aforementioned two-step approach. The first step should lead to an \emph{unbiased risk estimator} that will be used in the second step. Subsequently, we can evaluate the empirical training and/or validation risk by plugging only U training/validation data into the risk estimator. Thus, this two-step ERM needs \emph{no L validation data for hyperparameter tuning}, which is a huge advantage in training deep models nowadays. Note that given only U data, by no means could we learn the class priors \citep{menon15icml}, so that we assume \emph{all necessary class priors are also given}. This is the unique type of supervision we will leverage throughout this paper, and hence this learning problem still belongs to weakly-supervised learning rather than unsupervised learning.

\begin{figure}[t]
    {\centering
        \hfill{\includegraphics[width=0.48\textwidth]{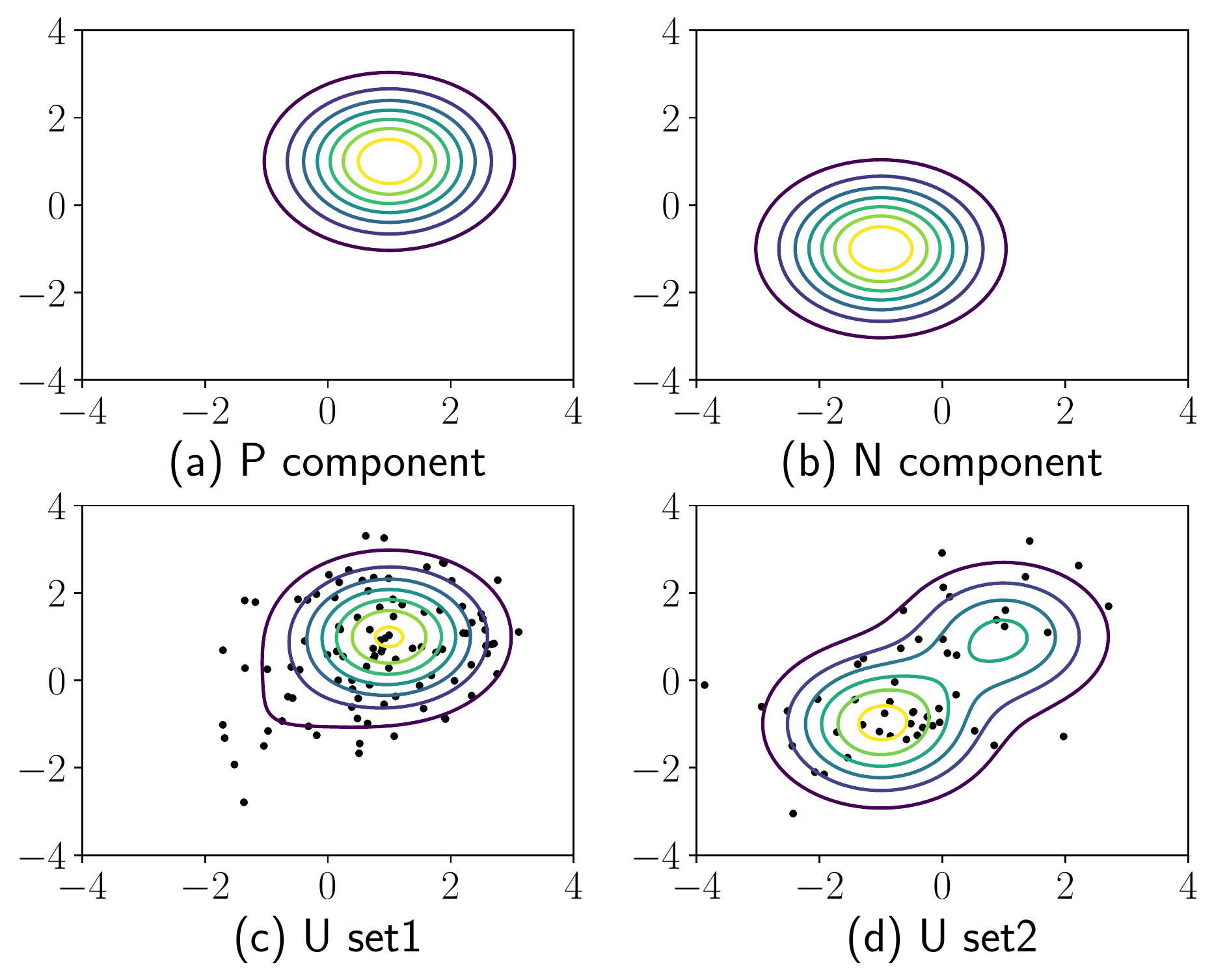}}%
        \hfill{\includegraphics[width=0.48\textwidth]{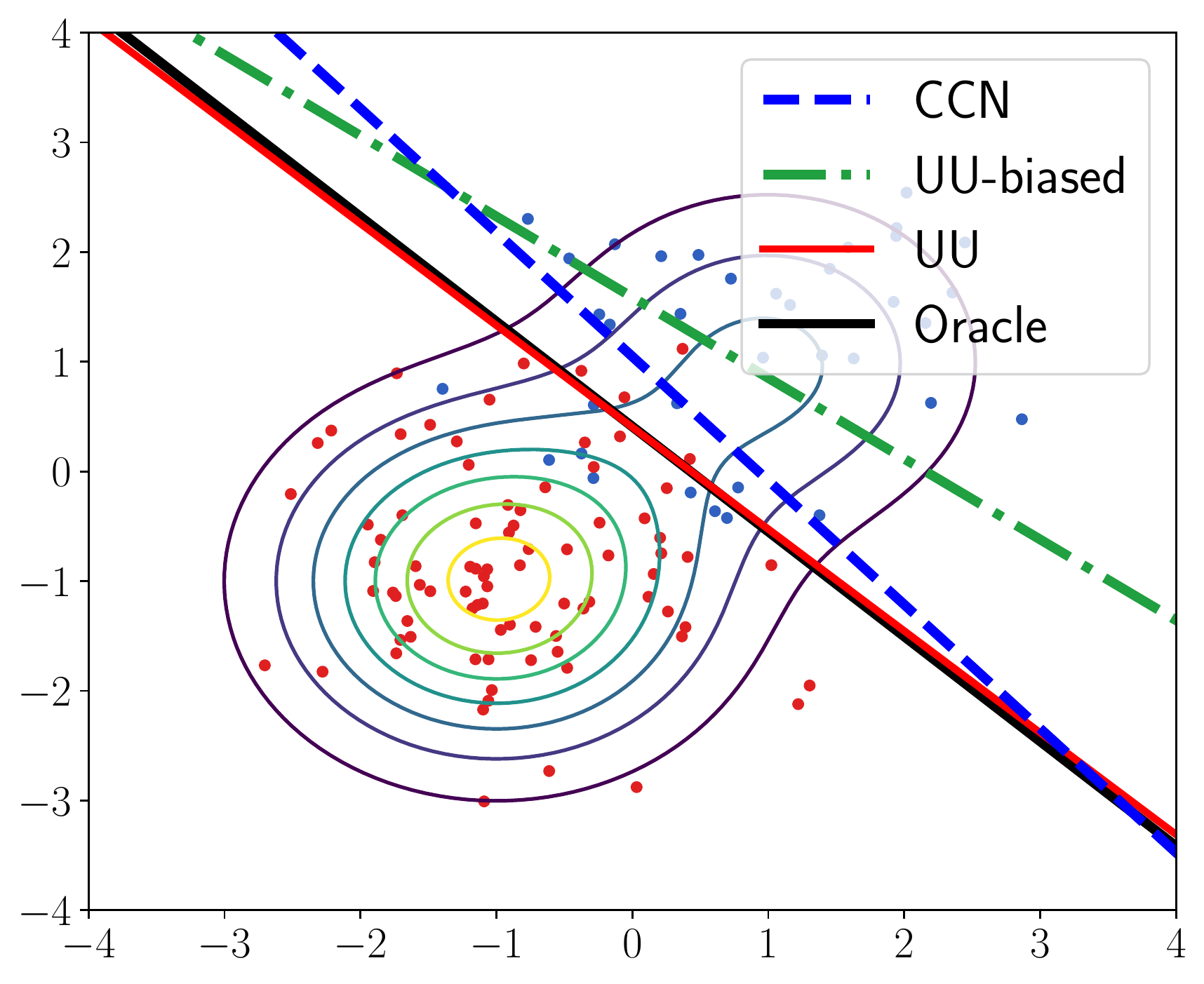}}
        \hfill}\\
    {\footnotesize%
        In the left panel, (a) and (b) show positive~(P) and negative~(N) components of the Gaussian mixture; (c) and (d) show two distributions (with class priors 0.9 and 0.4) where U training data are drawn (marked as black points). The right panel shows the test distribution (with class prior 0.3) and data (marked as blue for P and red for N), as well as four learned classifiers. In the legend, ``CCN'' refers to \cite{natarajan13nips}, ``UU-biased'' means supervised learning taking larger-/smaller-class-prior U data as P/N data, ``UU'' is the proposed method, and ``Oracle'' means supervised learning from the same amount of L data. See Appendix~\ref{sec:supp_figure1} for more information. We can see that UU is almost identical to Oracle and much better than the other two methods.
    }
    \vspace{1ex}%
    \caption{Illustrative example of classification from a Gaussian mixture dataset.}
    \label{fig:illustration}
    \vspace{-1ex}%
\end{figure}

In this paper, we raise a fundamental question in weakly-supervised learning---how many sets of U data with \emph{different class priors} are necessary for rewriting the risk? Our answer has two aspects:
\begin{itemize}
    \item Risk rewrite is impossible given a single set of U data (see Theorem~\ref{thm:u-rewritable} in Sec.~\ref{sec:u-erm});
    \item Risk rewrite becomes possible given two sets of U data (see Theorem~\ref{thm:uu-rewritable} in Sec.~\ref{sec:uu-erm}).
\end{itemize}
This suggests that \emph{three class priors%
\footnote{Two class-prior probabilities are of the training distributions and one is of the test distribution.}%
are all you need} to train deep models from only U data, while any two%
\footnote{One of the training distribution and one of the test distribution, or two of the training distributions.}\,%
should not be enough. The impossibility is a proof by contradiction, and the possibility is a proof by construction, following which we explicitly design an unbiased risk estimator. Therefore, with the help of this risk estimator, we propose an ERM-based learning method from two sets of U data. Thanks to the unbiasedness of our risk estimator, we derive an \emph{estimation error bound} which certainly guarantees the \emph{consistency of learning} \citep{mohri12FML,sshwartz14UML}.%
\footnote{Learning is consistent (more specifically the learned classifier is asymptotically consistent), if and only if as the amount of training data approaches infinity, the risk of the learned classifier converges to the risk of the optimal classifier, where the optimality is defined over a given hypothesis class.}
Experiments demonstrate that the proposed method could train multilayer perceptron, AllConvNet \citep{springenberg15iclr} and ResNet \citep{he16cvpr} from two sets of U data; it could outperform state-of-the-art methods for learning from two sets of U data. See Figure~\ref{fig:illustration} for how the proposed method works on a Gaussian mixture of two components.

\section{Problem setting and related work}

Consider the binary classification problem. Let $X$ and $Y$ be the input and output random variables such that
\begin{itemize}
    \item $p(x,y)$ is the \emph{underlying joint density},
    \item $\prp(x)=p(x\mid Y=+1)$ and $\prn(x)=p(x\mid Y=-1)$ are the \emph{class-conditional densities},
    \item $p(x)$ is the \emph{marginal density}, and
    \item $\pip=p(Y=+1)$ is the \emph{class-prior probability}.
\end{itemize}

\paragraph{Data generation process}
Let $\theta$ and $\theta'$ be two valid class priors such that $\theta\neq\theta'$ (here it does not matter if either $\theta$ or $\theta'$ equals $\pip$ or neither of them equals $\pip$), and let
\begin{align}
    \label{eq:train-density}%
    \ptr(x)=\theta\prp(x)+(1-\theta)\prn(x),\quad
    \ptr'(x)=\theta'\prp(x)+(1-\theta')\prn(x)
\end{align}
be the marginal densities from which U training data are drawn. Eq.~\eqref{eq:train-density} implies there are $\ptr(x,y)$ and $\ptr'(x,y)$, whose class-conditional densities are same and equal to those of $p(x,y)$, and whose class priors are different, i.e.,
\begin{align*}
    \ptr(x\mid y)=\ptr'(x\mid y)=p(x\mid y),\quad
    \ptr(Y=+1)=\theta\neq\theta'=\ptr'(Y=+1).
\end{align*}
If we could sample L data from $\ptr(x,y)$ or $\ptr'(x,y)$, it would reduce to supervised learning under \emph{class-prior change} \citep{sugi09DSML}.

Nonetheless, the problem of interest belongs to weakly-supervised learning---U training (and validation) data are supposed to be drawn according to \eqref{eq:train-density}. More specifically, we have
\begin{align}
    \label{eq:train-data}%
    \Xtr=\{x_1,\ldots,x_n\}\sim\ptr(x),\quad
    \Xtr'=\{x'_1,\ldots,x'_{n'}\}\sim\ptr'(x),
\end{align}
where $n$ and $n'$ are two natural numbers as the sample sizes of $\Xtr$ and $\Xtr'$. This is exactly same as \cite{christo13taai} and \cite{menon15icml} with some different names. In \cite{menon15icml}, $\theta$ and $\theta'$ are called \emph{corruption parameters}, and if we assume $\theta>\theta'$, $\ptr(x)$ is called the \emph{corrupted P density} and $\ptr'(x)$ is called the \emph{corrupted N density}. Despite the same data generation process in \eqref{eq:train-data}, a vital difference between the problem settings is performance measures to be optimized.

\paragraph{Performance measures}
Let $g:\bR^d\to\bR$ be an arbitrary \emph{decision function}, i.e., $g$ may literally be any binary classifier. Let $\ell:\bR\to\bR$ be the \emph{loss function}, such that the value $\ell(z)$ means the loss by predicting $g(x)$ when the ground truth is $y$ where $z=yg(x)$ is the \emph{margin}. The \emph{risk} of $g$ is
\begin{align}
    \label{eq:risk}%
    R(g) = \bE_{(X,Y)\sim p(x,y)}[\ell(Yg(X))]
    = \pip\Ep[\ell(g(X))]+(1-\pip)\En[\ell(-g(X))],
\end{align}
where $\Ep[\cdot]$ means $\bE_{X\sim\prp}[\cdot]$ and $\En[\cdot]$ means $\bE_{X\sim\prn}[\cdot]$ respectively. If $\ell$ is the \emph{zero-one loss} that is defined by $\ell_{01}(z)=(1-\sign(z))/2$, the risk is also known as the \emph{classification error} and it is the standard performance measure in classification. A balanced version of Eq.~\eqref{eq:risk} is
\begin{align}
    \label{eq:bal-risk}%
    B(g) = \frac{1}{2}\Ep[\ell(g(X))]+\frac{1}{2}\En[\ell(-g(X))],
\end{align}
and if $\ell$ is $\ell_{01}$, \eqref{eq:bal-risk} is named the \emph{balanced error} \citep{brodersen10icpr}. The vital difference is that \eqref{eq:risk} is chosen in the current paper whereas \eqref{eq:bal-risk} is chosen in \cite{christo13taai} and \cite{menon15icml} as the performance measure to be optimized.

We argue that \eqref{eq:risk} is more natural as the performance measure for binary classification than \eqref{eq:bal-risk}. By the phrase ``binary classification'', we mean $\pip$ is neither very large nor very small. Otherwise, due to extreme values of $\pip$ (i.e., either $\pip\approx0$ or $\pip\approx1$), the problem under consideration should be retrieval or detection rather than binary classification. Hence, it may be misleading to optimize \eqref{eq:bal-risk}, unless $\pip\approx\frac{1}{2}$ which implies that Eqs.~\eqref{eq:risk} and \eqref{eq:bal-risk} are essentially equivalent.

\paragraph{Related work}
Learning from only U data is previously regarded as \emph{discriminative clustering} \citep{xu04nips,valizadegan06nips,li09aistats,gomes10nips,sugiyama14neco,hu17icml}. Their goals are to maximize the margin or the mutual information between $X$ and $Y$. Recall that clustering is suboptimal, since it requires the \emph{cluster assumption} \citep{chapelle02nips} and it is rarely satisfied in practice that one cluster exactly corresponds to one class.

As mentioned earlier, learning from two sets of U data is already studied in \cite{christo13taai} and \cite{menon15icml}. Both of them adopt \eqref{eq:bal-risk} as the performance measure. In the former paper, $g$ is learned by estimating $\sign(\ptr(x)-\ptr'(x))$. In the latter paper, $g$ is learned by taking noisy L data from $\ptr(x)$ and $\ptr'(x)$ as clean L data from $\prp(x)$ and $\prn(x)$, and then its threshold is moved to the correct value by post-processing. In summary, instead of ERM, they evidence the possibility of \emph{empirical balanced risk minimization}, and no impossibility is proven.

Our findings are compatible with \emph{learning from label proportions} \citep{quadrianto09jmlr,yu13icml}. \cite{quadrianto09jmlr} proves that the minimal number of U sets is equal to the number of classes. However, their finding only holds for the linear model, the logistic loss, and their proposed method based on mean operators. On the other hand, \cite{yu13icml} is not ERM-based; it is based on discriminative clustering together with expectation regularization \citep{mann07icml}.

At first glance, our data generation process, using the names from \cite{menon15icml}, looks quite similar to \emph{class-conditional noise}~\citep[CCN,][]{angluin88mlj} in \emph{learning with noisy labels} \citep[cf.][]{natarajan13nips}.%
\footnote{There are quite few instance-dependent noise models \citep{menon16arxiv,cheng17arxiv}, and others explore instance-independent noise models \citep{natarajan13nips,sukhbaatar15iclr,menon15icml,liu16tpami,goldberger17iclr,patrini17cvpr,han18nips-masking} or assume no noise model at all \citep{reed15iclr,jiang18icml,ren18icml,han18nips-coteaching}.}
In fact, \cite{menon15icml} makes use of \emph{mutually contaminated distributions}~\citep[MCD,][]{scott13colt} that is more general than CCN. Denote by $\tilde{y}$ and $\tilde{p}(\cdot)$ the corrupted label and distributions. Then, CCN and MCD are defined by
\begin{align*}
    \begin{pmatrix}
    \tilde{p}(\tilde{Y}=+1\mid x)\\
    \tilde{p}(\tilde{Y}=-1\mid x)
    \end{pmatrix}
    = T_{\textrm{CCN}}
    \begin{pmatrix}
    p(Y=+1\mid x)\\
    p(Y=-1\mid x)
    \end{pmatrix}
    \quad\textrm{and}\quad
    \begin{pmatrix}
    \tilde{p}(x\mid\tilde{Y}=+1)\\
    \tilde{p}(x\mid\tilde{Y}=-1)
    \end{pmatrix}
    = T_{\textrm{MCD}}
    \begin{pmatrix}
    \prp(x)\\
    \prn(x)
    \end{pmatrix},
\end{align*}
where both of $T_{\textrm{CCN}}$ and $T_{\textrm{MCD}}$ are 2-by-2 matrices but $T_{\textrm{CCN}}$ is column normalized and $T_{\textrm{MCD}}$ is row normalized. It has been proven in \cite{menon15icml} that CCN is a strict special case of MCD. To be clear, $\tilde{p}(\tilde{y})$ is fixed in CCN once $\tilde{p}(\tilde{y}\mid x)$ is specified while $\tilde{p}(\tilde{y})$ is free in MCD after $\tilde{p}(x\mid\tilde{y})$ is specified. Furthermore, $\tilde{p}(x)=p(x)$ in CCN but $\tilde{p}(x)\neq p(x)$ in MCD. Due to this \emph{covariate shift}, CCN methods do not fit MCD problem setting, though MCD methods fit CCN problem setting. To the best of our knowledge, the proposed method is the first MCD method based on ERM.

\section{Learning from one set of U data}
\label{sec:u-erm}%

From now on, we prove that knowing $\pip$ and $\theta$ is insufficient for rewriting $R(g)$.

\subsection{A brief review of ERM}
\label{sec:erm-rev}%

To begin with, we review ERM \citep{vapnik98SLT} by imaging that we are given $\Xp=\{x_1,\ldots,x_n\}\sim\prp(x)$ and $\Xn=\{x'_1,\ldots,x'_{n'}\}\sim\prn(x)$. Then, we would go through the following procedure:
\begin{enumerate}
    \item Choose a \emph{surrogate loss} $\ell(z)$, so that $R(g)$ in Eq.~\eqref{eq:risk} is defined.
    \item Choose a model $\cG$, so that $\min_{g\in\cG}R(g)$ is achievable by ERM.
    \item Approximate $R(g)$ by
    \begin{align}
        \label{eq:risk-pn-hat}%
        \hRpn(g) = \frac{\pip}{n}\sum\nolimits_{i=1}^n\ell(g(x_i))+\frac{1-\pip}{n'}\sum\nolimits_{j=1}^{n'}\ell(-g(x'_j)).
    \end{align}
    \item Minimize $\hRpn(g)$, with appropriate regularization, by favorite optimization algorithm.
\end{enumerate}
Here, $\ell$ should be \emph{classification-calibrated} \citep{bartlett06jasa},%
\footnote{$\ell$ is classification-calibrated if and only if there is a convex, invertible, and nondecreasing transformation $\psi_\ell$ with $\psi_\ell(0)=0$, such that $\psi_\ell(R(g;\ell_{01})-\inf\nolimits_gR(g;\ell_{01})) \le R(g;\ell)-\inf\nolimits_gR(g;\ell)$. If $\ell$ is a convex loss, it is classification-calibrated if and only if it is differentiable at the origin and $\ell'(0)<0$.}
in order to guarantee that $R(g;\ell)$ and $R(g;\ell_{01})$ have the same minimizer over all measurable functions.
This minimizer is the \emph{Bayes optimal classifier} and denoted by $g^{**}=\argmin_g R(g)$. The Bayes optimal risk $R(g^{**})$ is usually unachievable by ERM as $n,n'\to\infty$. That is why by choosing a model $\cG$, $g^*=\argmin_{g\in\cG}R(g)$ became the target (i.e., $\hgpn=\argmin_{g\in\cG}\hRpn(g)$ will converge to $g^*$ as $n,n'\to\infty$). In statistical learning, the \emph{approximation error} is $R(g^*)-R(g^{**})$, and the \emph{estimation error} is $R(\hgpn)-R(g^*)$. Learning is consistent if and only if the estimation error converges to zero as $n,n'\to\infty$.

\subsection{Impossibility of risk rewrite}


Recall that $R(g)$ is approximated by \eqref{eq:risk-pn-hat} given $\Xp$ and $\Xn$, which does not work given $\Xtr$ and $\Xtr'$. We might rewrite $R(g)$ so that it could be approximated given $\Xtr$ and/or $\Xtr'$. This is known as the \emph{backward correction} in learning with noisy/corrupted labels (\citealp{patrini17cvpr}; see also \citealp{natarajan13nips,vanrooyen18jmlr}).

\begin{definition}
    \label{def:u-rewritable}%
    We say that $R(g)$ in \eqref{eq:risk} is rewritable given $\ptr$, if and only if\,%
    \footnote{This is because the backward correction in \eqref{eq:u-rewritable}, if exists, would be unique.}
    there exist constants $a$ and $b$, such that for any $g$ it holds that
    \begin{align}
        \label{eq:u-rewritable}%
        R(g) = \Etp[\barell(g(X))],
    \end{align}
    where $\Etp[\cdot]$ means $\bE_{X\sim\ptr}[\cdot]$ and $\barell(z)=a\ell(z)+b\ell(-z)$ is the corrected loss function.
\end{definition}

In Eq.~\eqref{eq:u-rewritable}, the expectation is with respect to $\ptr$ and $\theta$ is a free variable in it. The impossibility will be stronger, if $\theta$ is unspecified and allowed to be adjusted according to $\pip$.

\begin{theorem}
    \label{thm:u-rewritable}%
    Let $\ell$ be $\ell_{01}$, or any bounded surrogate loss satisfying that
    \begin{align}
        \label{eq:cond-bnd-loss}%
        0\le\ell(+\infty)=\lim\nolimits_{z\to+\infty}\ell(z)
        <\lim\nolimits_{z\to-\infty}\ell(z)=\ell(-\infty)<+\infty.
    \end{align}
    Assume $\prp$ and $\prn$ are almost surely separable. Then, $R(g)$ is not rewritable though $\theta$ is free.%
    \footnote{Please find in Appendix~\ref{sec:proof} the proofs of theorems.}
\end{theorem}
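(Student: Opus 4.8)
The plan is to argue by contradiction. Suppose $R(g)$ were rewritable, so that there exist constants $a,b$ and some valid prior $\theta\in(0,1)$ for which $R(g)=\Etp[\barell(g(X))]$ holds for \emph{every} measurable $g$, with $\barell(z)=a\ell(z)+b\ell(-z)$. Expanding $\ptr=\theta\prp+(1-\theta)\prn$ turns the right-hand side into a linear combination of the four functionals $\Ep[\ell(g)]$, $\Ep[\ell(-g)]$, $\En[\ell(g)]$, $\En[\ell(-g)]$, whereas the left-hand side $R(g)=\pip\Ep[\ell(g)]+(1-\pip)\En[\ell(-g)]$ involves only two of them. The goal is to show that no triple $(a,b,\theta)$ can reconcile the two sides simultaneously for all $g$.

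The crucial simplification comes from the separability hypothesis, and \textbf{this reduction is the step I expect to be the main obstacle to state cleanly}. Because $\prp$ and $\prn$ have almost surely disjoint supports, for any pair of reals $s,t$ I can build a single measurable $g$ equal to $s$ on $\mathrm{supp}(\prp)$ and to $t$ on $\mathrm{supp}(\prn)$; its values off both supports are irrelevant since that set has measure zero under $\prp$ and $\prn$. For such a $g$ we have $\Ep[\ell(g)]=\ell(s)$, $\Ep[\ell(-g)]=\ell(-s)$, $\En[\ell(g)]=\ell(t)$, $\En[\ell(-g)]=\ell(-t)$, so the rewrite identity collapses to the scalar requirement
\[
\pip\,\ell(s)+(1-\pip)\,\ell(-t)=\theta a\,\ell(s)+\theta b\,\ell(-s)+(1-\theta)a\,\ell(t)+(1-\theta)b\,\ell(-t),
\]
which must now hold for all $s,t\in\bR$. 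This maneuver decouples the two class-conditional expectations and lets me avoid any delicate argument about linear independence of functionals over the full hypothesis class; everything downstream is then elementary.

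Next I would evaluate this scalar identity at the four ``corners'' $s,t\to\pm\infty$, using that both $\ell_{01}$ and any loss satisfying \eqref{eq:cond-bnd-loss} have well-defined finite limits $\ell(+\infty)$ and $\ell(-\infty)$ with $\ell(+\infty)<\ell(-\infty)$. The four corner limits produce four linear scalar equations in $a,b$, with coefficients built from $\theta,\pip$ and $\ell(\pm\infty)$. Subtracting the $(+\infty,+\infty)$ and $(-\infty,-\infty)$ equations and dividing by the nonzero gap $\ell(+\infty)-\ell(-\infty)$ yields $a-b=2\pip-1$; subtracting the two mixed corners $(+\infty,-\infty)$ and $(-\infty,+\infty)$ and dividing by the same gap yields $(2\theta-1)(a-b)=1$.

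Finally I would combine the two relations into the necessary condition $(2\theta-1)(2\pip-1)=1$. Since $\pip\in(0,1)$ forces $|2\pip-1|<1$ and any valid prior $\theta\in(0,1)$ forces $|2\theta-1|<1$, the product has absolute value strictly below $1$ and can never equal $1$, a contradiction; crucially, the freedom to tune $\theta$ does not rescue the identity, which is exactly the stronger conclusion the theorem asserts. The strict inequality $\ell(+\infty)<\ell(-\infty)$ is used only to license the two divisions, and no separate treatment of $\pip=\tfrac12$ is needed because the magnitude argument already closes that case (there the condition degenerates to $1=0$).
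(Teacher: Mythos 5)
Your proposal is correct and follows essentially the same route as the paper's proof: both exploit separability to build test functions that are constant (at $\pm\infty$) on each class support, reduce the rewrite identity to a small linear system in $a$, $b$, $\theta$, and derive a contradiction with $\theta\in[0,1]$ (your final condition $(2\theta-1)(2\pip-1)=1$ is algebraically identical to the paper's $\theta=\pip/(2\pip-1)$). The only cosmetic differences are that you use all four corners and a magnitude argument where the paper uses three specific classifiers and solves for $\theta$ explicitly, and that you handle the general bounded loss uniformly where the paper defers it to a closing remark.
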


This theorem shows that under the separability assumption of $\prp$ and $\prn$, $R(g)$ is not rewritable. As a consequence, we lack a learning objective, that is, the empirical training risk. It is even worse---we cannot access the empirical validation risk of $g$ after it is trained by other learning methods such as discriminative clustering. In particular, $\ell_{01}$ satisfies \eqref{eq:cond-bnd-loss}, which implies that the common practice of hyperparameter tuning is disabled by Theorem~\ref{thm:u-rewritable}, since U validation data are also drawn from $\ptr$.

\section{Learning from two sets of U data}
\label{sec:uu-erm}%

From now on, we prove that knowing $\pip$, $\theta$ and $\theta'$ is sufficient for rewriting $R(g)$.

\subsection{Possibility of risk rewrite, and unbiased risk estimators}

We have proven that $R(g)$ is not rewritable given $\ptr$, and \cite{quadrianto09jmlr} has proven that $R(g)$ can be estimated from $\Xtr$ and $\Xtr'$, where $g$ is a linear model and $\ell$ is the logistic loss. These facts motivate us to investigate the possibility of rewriting $R(g)$, where $g$ and $\ell$ are both arbitrary.%
\footnote{The technique that underlies Theorem~\ref{thm:uu-rewritable} is totally different from \cite{quadrianto09jmlr}. We shall obtain \eqref{eq:uu-coef} by solving a linear system resulted from Definition~\ref{def:uu-rewritable}. As previously mentioned, \cite{quadrianto09jmlr} is based on mean operators, and it cannot be further generalized to handle nonlinear $g$ or arbitrary $\ell$.}

\begin{definition}
    \label{def:uu-rewritable}%
    We say that $R(g)$ is rewritable given $\ptr$ and $\ptr'$, if and only if\,%
    \footnote{This is similar because the backward correction in \eqref{eq:uu-rewritable}, if exists, would be unique.}
    there exist constants $a$, $b$, $c$ and $d$, such that for any $g$ it holds that
    \begin{align}
        \label{eq:uu-rewritable}%
        R(g) = \Etp[\barellpos(g(X))]
        +\Etn[\barellneg(-g(X))],
    \end{align}
    where $\barellpos(z)=a\ell(z)+b\ell(-z)$ and $\barellneg(z)=c\ell(z)+d\ell(-z)$ are the corrected loss functions.
\end{definition}

In Eq.~\eqref{eq:uu-rewritable}, the expectations are with respect to $\ptr$ and $\ptr'$ that are regarded as the corrupted $\prp$ and $\prn$. There are two free variables $\theta$ and $\theta'$ in $\ptr$ and $\ptr'$. The possibility will be stronger, if $\theta$ and $\theta'$ are already specified and disallowed to be adjusted according to $\pip$.

\begin{theorem}
    \label{thm:uu-rewritable}%
    Fix $\theta$ and $\theta'$. Assume $\theta>\theta'$; otherwise, swap $\ptr$ and $\ptr'$ to make sure $\theta>\theta'$. Then, $R(g)$ is rewritable, by letting
    \begin{align}
        \label{eq:uu-coef}%
        a=\frac{(1-\theta')\pip}{\theta-\theta'},\quad
        b=-\frac{\theta'(1-\pip)}{\theta-\theta'},\quad
        c=\frac{\theta(1-\pip)}{\theta-\theta'},\quad
        d=-\frac{(1-\theta)\pip}{\theta-\theta'}.
    \end{align}
\end{theorem}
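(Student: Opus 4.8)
The plan is to verify the claimed rewrite by substituting the densities $\ptr$ and $\ptr'$ from \eqref{eq:train-density} into the right-hand side of \eqref{eq:uu-rewritable}, expanding everything in terms of $\Ep$ and $\En$, and then matching the resulting coefficients against the target expression $R(g)=\pip\Ep[\ell(g(X))]+(1-\pip)\En[\ell(-g(X))]$ from \eqref{eq:risk}. Concretely, I would first write out $\Etp[\barellpos(g(X))]=\int(\theta\prp+(1-\theta)\prn)(a\ell(g)+b\ell(-g))\dif x$ and similarly $\Etn[\barellneg(-g(X))]=\int(\theta'\prp+(1-\theta')\prn)(c\ell(-g)+d\ell(g))\dif x$, being careful that the argument in $\barellneg$ is $-g(X)$, so $\barellneg(-g(X))=c\ell(-g(X))+d\ell(g(X))$.

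Next I would collect the four distinct integrand types: $\prp\ell(g)$, $\prp\ell(-g)$, $\prn\ell(g)$, and $\prn\ell(-g)$, i.e.\ the four expectations $\Ep[\ell(g(X))]$, $\Ep[\ell(-g(X))]$, $\En[\ell(g(X))]$, $\En[\ell(-g(X))]$. Comparing with \eqref{eq:risk}, the rewrite holds for \emph{arbitrary} $g$ if and only if the coefficient of $\Ep[\ell(g)]$ equals $\pip$, the coefficient of $\En[\ell(-g)]$ equals $1-\pip$, and the coefficients of the two ``cross'' terms $\Ep[\ell(-g)]$ and $\En[\ell(g)]$ both vanish. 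This yields a linear system of four equations in the four unknowns $a,b,c,d$:
\begin{align*}
    a\theta + d\theta' = \pip,\quad
    a(1-\theta)+d(1-\theta') = 0,\quad
    c\theta'+b\theta = 0,\quad
    c(1-\theta')+b(1-\theta) = 1-\pip.
\end{align*}
I would then solve this system and check that the solution coincides with \eqref{eq:uu-coef}; the denominator $\theta-\theta'$ appearing there is exactly the determinant of the associated $2\times2$ blocks, which is nonzero precisely because $\theta\neq\theta'$ (guaranteed after the swap ensuring $\theta>\theta'$). This nondegeneracy is where the assumption $\theta\neq\theta'$ does its work, mirroring the impossibility result of Theorem~\ref{thm:u-rewritable} for a single set.

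The main subtlety, rather than any real obstacle, is justifying that matching coefficients is both \emph{necessary} and \emph{sufficient}. Sufficiency is immediate: once \eqref{eq:uu-coef} is plugged in, the cross terms cancel and the two surviving terms reproduce \eqref{eq:risk} identically, for every $g$. For necessity (which underlies the uniqueness footnote in Definition~\ref{def:uu-rewritable}), I would argue that since the identity must hold for all measurable $g$, and since $\prp,\prn$ are genuinely distinct densities, the four expectation functionals are linearly independent as functions of $g$, forcing the coefficient equations; this is what makes the backward correction unique. Because the theorem only claims existence via an explicit formula, the bulk of the work reduces to the sufficiency check, so I expect the proof to be a short direct verification rather than a delicate argument—the only care needed is bookkeeping the sign flip in the argument of $\barellneg$ and confirming the algebra of solving the linear system.
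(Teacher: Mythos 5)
Your proposal is correct and follows essentially the same route as the paper's proof: expand $\Etp[\barellpos(g(X))]+\Etn[\barellneg(-g(X))]$ into the four expectations $\Ep[\ell(g)]$, $\Ep[\ell(-g)]$, $\En[\ell(g)]$, $\En[\ell(-g)]$, match coefficients against \eqref{eq:risk}, and solve the resulting $4\times4$ linear system (decoupling into two $2\times2$ blocks with determinant $\theta-\theta'$), which is exactly the system the paper writes down and solves to obtain \eqref{eq:uu-coef}. Your additional remarks on necessity and uniqueness go slightly beyond what the paper proves (it only verifies sufficiency, relegating uniqueness to a footnote), but they are sound.
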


Theorem~\eqref{thm:uu-rewritable} immediately leads to an unbiased risk estimator, namely,
\begin{align}
    \label{eq:risk-uu-hat}%
    \begin{split}
        \hRuu(g) &=
        \frac{1}{n}\sum\nolimits_{i=1}^n\left(\frac{(1-\theta')\pip}{\theta-\theta'}\ell(g(x_i))
        -\frac{\theta'(1-\pip)}{\theta-\theta'}\ell(-g(x_i))\right)\\
        &\quad +\frac{1}{n'}\sum\nolimits_{j=1}^{n'}\left(-\frac{(1-\theta)\pip}{\theta-\theta'}\ell(g(x'_j))
        +\frac{\theta(1-\pip)}{\theta-\theta'}\ell(-g(x'_j))\right).
    \end{split}
\end{align}
Eq.~\eqref{eq:risk-uu-hat} is useful for both training (by plugging U training data into it) and hyperparameter tuning (by plugging U validation data into it). We hereafter refer to the process of obtaining the empirical risk minimizer of \eqref{eq:risk-uu-hat}, i.e., $\hguu=\argmin_{g\in\cG}\hRuu(g)$, as \emph{unlabeled-unlabeled}~(UU) \emph{learning}. The proposed UU learning is by nature ERM-based, and consequently $\hguu$ can be obtained by powerful \emph{stochastic optimization} algorithms \citep[e.g.,][]{duchi11jmlr,kingma15iclr}.

\paragraph{Simplification}
Note that \eqref{eq:risk-uu-hat} may require some efforts to implement. Fortunately, it can be simplified by employing $\ell$ that satisfies a \emph{symmetric condition}:
\begin{align}
    \label{eq:cond-sym-loss}%
    \ell(z)+\ell(-z)=1.
\end{align}
Eq.~\eqref{eq:cond-sym-loss} covers $\ell_{01}$, a ramp loss $\ell_\mathrm{ramp}(z)=\max\{0,\min\{1,(1-z)/2\}\}$ in \cite{christo14nips} and a sigmoid loss $\ellsig(z)=1/(1+\exp(z))$ in \cite{kiryo17nips}. With the help of \eqref{eq:cond-sym-loss}, \eqref{eq:risk-uu-hat} can be simplified as
\begin{align}
    \label{eq:risk-uu-hat-sym}%
    \hRuus(g) =
    \frac{1}{n}\sum\nolimits_{i=1}^n\alpha\ell(g(x_i))
    +\frac{1}{n'}\sum\nolimits_{j=1}^{n'}\alpha'\ell(-g(x'_j))
    -\frac{\theta'(1-\pip)+(1-\theta)\pip}{\theta-\theta'},
\end{align}
where $\alpha=(\theta'+\pip-2\theta'\pip)/(\theta-\theta')$ and $\alpha'=(\theta+\pip-2\theta\pip)/(\theta-\theta')$. Similarly, \eqref{eq:risk-uu-hat-sym} is an unbiased risk estimator, and it is easy to implement with existing codes of cost-sensitive learning.

\paragraph{Special cases}
Consider some special cases of \eqref{eq:risk-uu-hat} by specifying $\theta$ and $\theta'$. It is obvious that \eqref{eq:risk-uu-hat} reduces to \eqref{eq:risk-pn-hat} for supervised learning, if $\theta=1$ and $\theta'=0$. Next, \eqref{eq:risk-uu-hat} reduces to
\begin{align*}
    \hRpu(g) =
    \frac{1}{n}\sum\nolimits_{i=1}^n\pip\ell(g(x_i))
    -\frac{1}{n}\sum\nolimits_{i=1}^n\pip\ell(-g(x_i))
    +\frac{1}{n'}\sum\nolimits_{j=1}^{n'}\ell(-g(x'_j)),
\end{align*}
if $\theta=1$ and $\theta'=\pip$, and we recover the unbiased risk estimator in positive-unlabeled learning \citep{christo15icml,kiryo17nips}. Additionally, \eqref{eq:risk-uu-hat} reduces to a fairly complicated unbiased risk estimator in similar-unlabeled learning \citep{bao18icml}, if $\theta=\pip,\theta'=\pip^2/(2\pip^2-2\pip+1)$ or vice versa. Therefore, UU learning is a very general framework in weakly-supervised learning.

\subsection{Consistency and convergence rate}

The consistency of UU learning is guaranteed due to the unbiasedness of \eqref{eq:risk-uu-hat}. In what follows, we analyze the estimation error $R(\hguu)-R(g^*)$ (see Sec.~\ref{sec:erm-rev} for the definition). To this end, assume there are $C_g>0$ and $C_\ell>0$ such that $\sup_{g\in\cG}\|g\|_\infty\le C_g$ and $\sup_{|z|\le C_g}\ell(z)\le C_\ell$, and assume $\ell(z)$ is Lipschitz continuous for all $|z|\le C_g$ with a Lipschitz constant $L_\ell$. Let $\fR_n(\cG)$ and $\fR_{n'}'(\cG)$ be the \emph{Rademacher complexity} of $\cG$ over $\ptr(x)$ and $\ptr'(x)$ \citep{mohri12FML,sshwartz14UML}. For convenience, denote by $\chi_{n,n'}=\alpha/\sqrt{n}+\alpha'/\sqrt{n'}$.

\begin{theorem}
    \label{thm:est-err}%
    For any $\delta>0$, let $C_\delta=\sqrt{(\ln2/\delta)/2}$, then we have with probability at least $1-\delta$,
    \begin{align}
        \label{eq:est-err-bound}%
        R(\hguu)-R(g^*)
        \le 4L_\ell\alpha\fR_n(\cG)+4L_\ell\alpha'\fR_{n'}'(\cG) +2C_\ell C_\delta\chi_{n,n'},
    \end{align}
    where the probability is over repeated sampling of $\Xtr$ and $\Xtr'$ for training $\hguu$.
\end{theorem}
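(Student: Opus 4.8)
The plan is to follow the standard ERM route: reduce the excess risk to a uniform deviation, concentrate that deviation with a bounded-difference inequality, and bound its expectation by Rademacher complexity. Write the (symmetric) estimator as $\hRuus(g)=\frac{\alpha}{n}\sum_{i=1}^n\ell(g(x_i))+\frac{\alpha'}{n'}\sum_{j=1}^{n'}\ell(-g(x_j'))+\const$; by Theorem~\ref{thm:uu-rewritable} it is unbiased, $\bE[\hRuus(g)]=R(g)$. Since $\hguu=\argmin_{g\in\cG}\hRuus(g)$, decomposing $R(\hguu)-R(g^*)=[R(\hguu)-\hRuus(\hguu)]+[\hRuus(\hguu)-\hRuus(g^*)]+[\hRuus(g^*)-R(g^*)]$ and dropping the nonpositive middle term yields $R(\hguu)-R(g^*)\le\sup_{g\in\cG}[R(g)-\hRuus(g)]+\sup_{g\in\cG}[\hRuus(g)-R(g)]$. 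It then suffices to control the two one-sided uniform deviations.

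First I would concentrate each supremum by McDiarmid's inequality. With $0\le\ell\le C_\ell$ on $|z|\le C_g$, replacing one $x_i$ changes $\hRuus$ by at most $\alpha C_\ell/n$ and replacing one $x_j'$ by at most $\alpha'C_\ell/n'$, so the sum of squared bounded differences is $\alpha^2 C_\ell^2/n+\alpha'^2 C_\ell^2/n'$. Thus each supremum exceeds its mean by more than $C_\ell\sqrt{(\alpha^2/n+\alpha'^2/n')\ln(2/\delta)/2}$ with probability at most $\delta/2$. Square-root subadditivity gives $\sqrt{\alpha^2/n+\alpha'^2/n'}\le\alpha/\sqrt{n}+\alpha'/\sqrt{n'}=\chi_{n,n'}$ (valid since $\alpha,\alpha'>0$ when $\theta>\theta'$), so each deviation is at most $C_\ell C_\delta\chi_{n,n'}$; a union bound over the two sides holds with probability $\ge1-\delta$ and contributes the term $2C_\ell C_\delta\chi_{n,n'}$, which is where the $\ln(2/\delta)$ inside $C_\delta$ originates.

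It remains to bound the expectation of each one-sided deviation. Splitting the supremum of the sum over $\Xtr$ and $\Xtr'$ into a sum of suprema and applying the symmetrization lemma to each introduces Rademacher variables, giving $\bE\sup_g[\hRuus(g)-R(g)]\le 2\alpha\,\bE\sup_g\frac{1}{n}\sum_i\sigma_i\ell(g(x_i))+2\alpha'\,\bE\sup_g\frac{1}{n'}\sum_j\sigma_j'\ell(-g(x_j'))$. Because $z\mapsto\ell(z)$ and $z\mapsto\ell(-z)$ are both $L_\ell$-Lipschitz on $|z|\le C_g$, Talagrand's contraction lemma peels off the loss at a cost $L_\ell$, bounding the right-hand side by $2\alpha L_\ell\fR_n(\cG)+2\alpha'L_\ell\fR_{n'}'(\cG)$. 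The other side is identical, so adding them gives $4L_\ell\alpha\fR_n(\cG)+4L_\ell\alpha'\fR_{n'}'(\cG)$, which together with the concentration term reproduces \eqref{eq:est-err-bound}.

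The conceptual steps are routine; the delicate part is the bookkeeping of constants. In particular I would take care to (i) track the weights $\alpha,\alpha'$ consistently through both the bounded-difference constants and the contraction step, (ii) justify the subadditivity that turns $\sqrt{\alpha^2/n+\alpha'^2/n'}$ into $\chi_{n,n'}$, and (iii) verify that the composition $z\mapsto\ell(-z)$ is genuinely $L_\ell$-Lipschitz so that a single contraction constant suffices for the $\Xtr'$ term. Getting these factors exactly right is the main obstacle, since the looseness introduced by the subadditivity bound is precisely what makes $\chi_{n,n'}$, rather than $\sqrt{\alpha^2/n+\alpha'^2/n'}$, appear in the final statement.
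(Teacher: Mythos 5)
Your proposal is correct and follows essentially the same route as the paper's proof: the same three-term decomposition dropping the nonpositive ERM term, McDiarmid with bounded differences $\alpha C_\ell/n$ and $\alpha' C_\ell/n'$ followed by the subadditivity $\sqrt{\alpha^2/n+\alpha'^2/n'}\le\chi_{n,n'}$, and symmetrization plus Talagrand's contraction to get the Rademacher terms. The only cosmetic difference is that you work with $\hRuus$ while the paper states its uniform-deviation lemma for $\hRuu$; the bounded-difference constants coincide since $|a|+|b|=\alpha$ and $|c|+|d|=\alpha'$.
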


Theorem~\ref{thm:est-err} ensures that UU learning is consistent (and so are all the special cases): as $n,n'\to\infty$, $R(\hguu)\to R(g^*)$, since $\fR_n(\cG),\fR_{n'}'(\cG)\to0$ for all parametric models with a bounded norm such as deep networks trained with weight decay. Moreover, $R(\hguu)\to R(g^*)$ in $\cO_p(\chi_{n,n'})$, where $\cO_p$ denotes the order in probability, for all linear-in-parameter models with a bounded norm, including non-parametric kernel models in \emph{reproducing kernel Hilbert spaces} \citep{scholkopf01LK}.

\section{Experiments}
\label{sec:experiment}%

In this section, we experimentally analyze the proposed method in training deep networks and subsequently experimentally compare it with state-of-the-art methods for learning from two sets of U data. The implementation in our experiments is based on Keras (see \url{https://keras.io}); it is available at \url{https://github.com/lunanbit/UUlearning}.

\begin{table}[b]
    \caption{Specification of benchmark datasets, models, and optimization algorithms.}
    \label{tab:dataset}
    \vspace{-1ex}%
    \begin{center}\small
        \begin{tabular*}{0.93\textwidth}{l|rrrr|l|l}
            \toprule
            Dataset & \# Train & \# Test & \# Feature & $\pip$ & Model $g(x;\theta)$ & Optimizer \\
            \midrule
            MNIST & 60,000 & 10,000 & 784 & 0.49 & FC with ReLU (depth 5) & SGD \\
            Fashion-MNIST & 60,000 & 10,000 & 784 & 0.50 & FC with ReLU (depth 5) & SGD \\
            SVHN & 100,000 & 26,032 & 3,072 & 0.27 & AllConvNet (depth 12) & Adam \\
            CIFAR-10 & 50,000 & 10,000 & 3,072 & 0.60 & ResNet (depth 32) & Adam \\
            \bottomrule
        \end{tabular*}
        \vskip1ex%
        \begin{minipage}{0.92\textwidth}\footnotesize%
            See \url{http://yann.lecun.com/exdb/mnist/} for MNIST \citep{lecun98mnist},
            \url{https://github.com/zalandoresearch/fashion-mnist} for Fashion-MNIST \citep{xiao17arxiv},
            \url{http://ufldl.stanford.edu/housenumbers/} for SVHN \citep{yuval11svhn},
            as well as \url{https://www.cs.toronto.edu/~kriz/cifar.html} for CIFAR-10 \citep{krizhevsky09cifar}.
        \end{minipage}
    \end{center}
\end{table}

\begin{figure}[t]
    \centering
    \begin{minipage}[c]{0.1\textwidth}~\end{minipage}\hspace{1em}%
    \begin{minipage}[c]{0.4\textwidth}\centering\small $\theta=0.9,\theta'=0.1$ \end{minipage}%
    \begin{minipage}[c]{0.4\textwidth}\centering\small $\theta=0.8,\theta'=0.2$ \end{minipage}\\
    \begin{minipage}[c]{0.1\textwidth}\flushright\small MNIST \end{minipage}\hspace{1em}%
    \begin{minipage}[c]{0.8\textwidth}
        \includegraphics[width=0.5\textwidth]{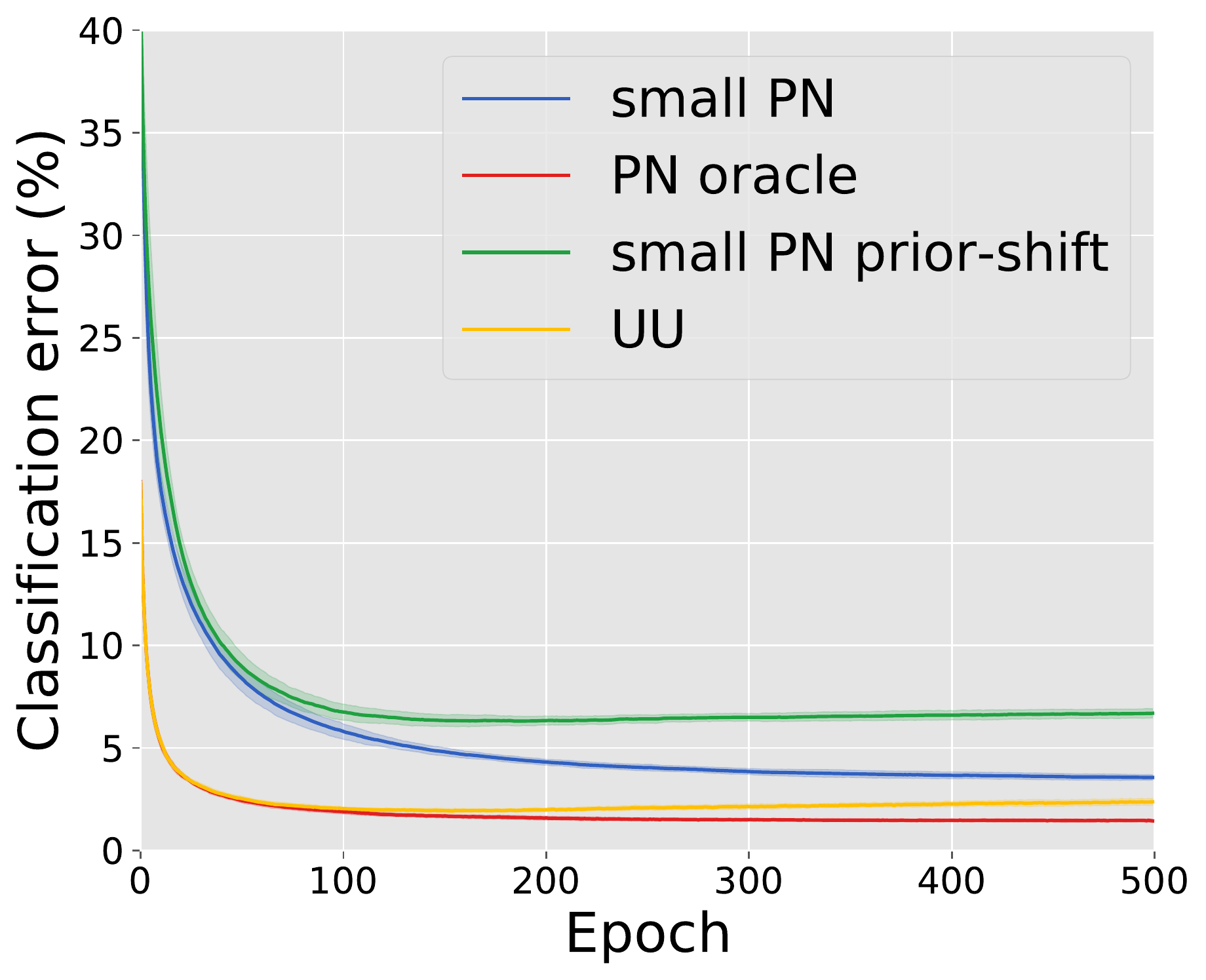}%
        \includegraphics[width=0.5\textwidth]{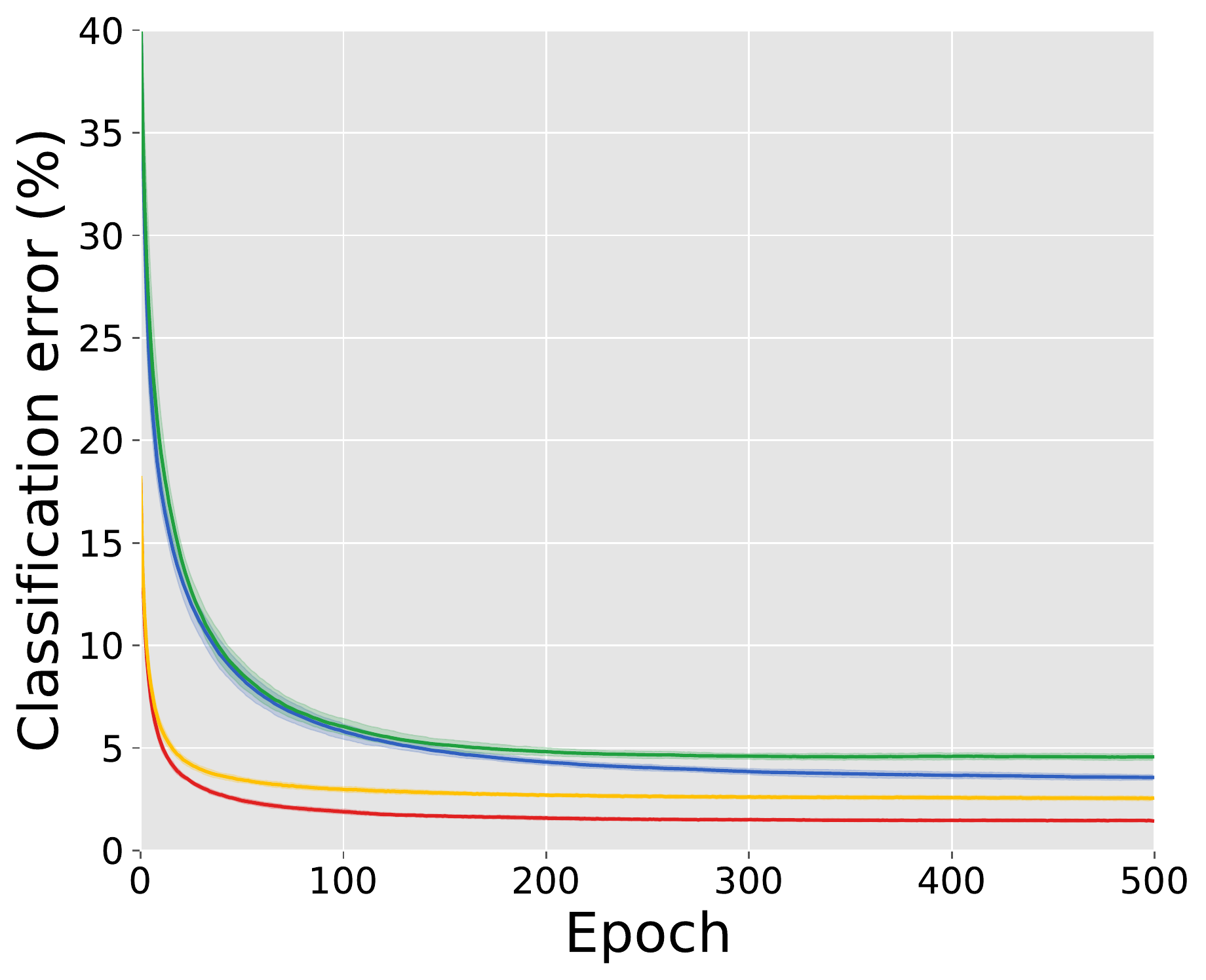}
    \end{minipage}\\
    \begin{minipage}[c]{0.1\textwidth}\flushright\small Fashion-MNIST \end{minipage}\hspace{1em}%
    \begin{minipage}[c]{0.8\textwidth}
        \includegraphics[width=0.5\textwidth]{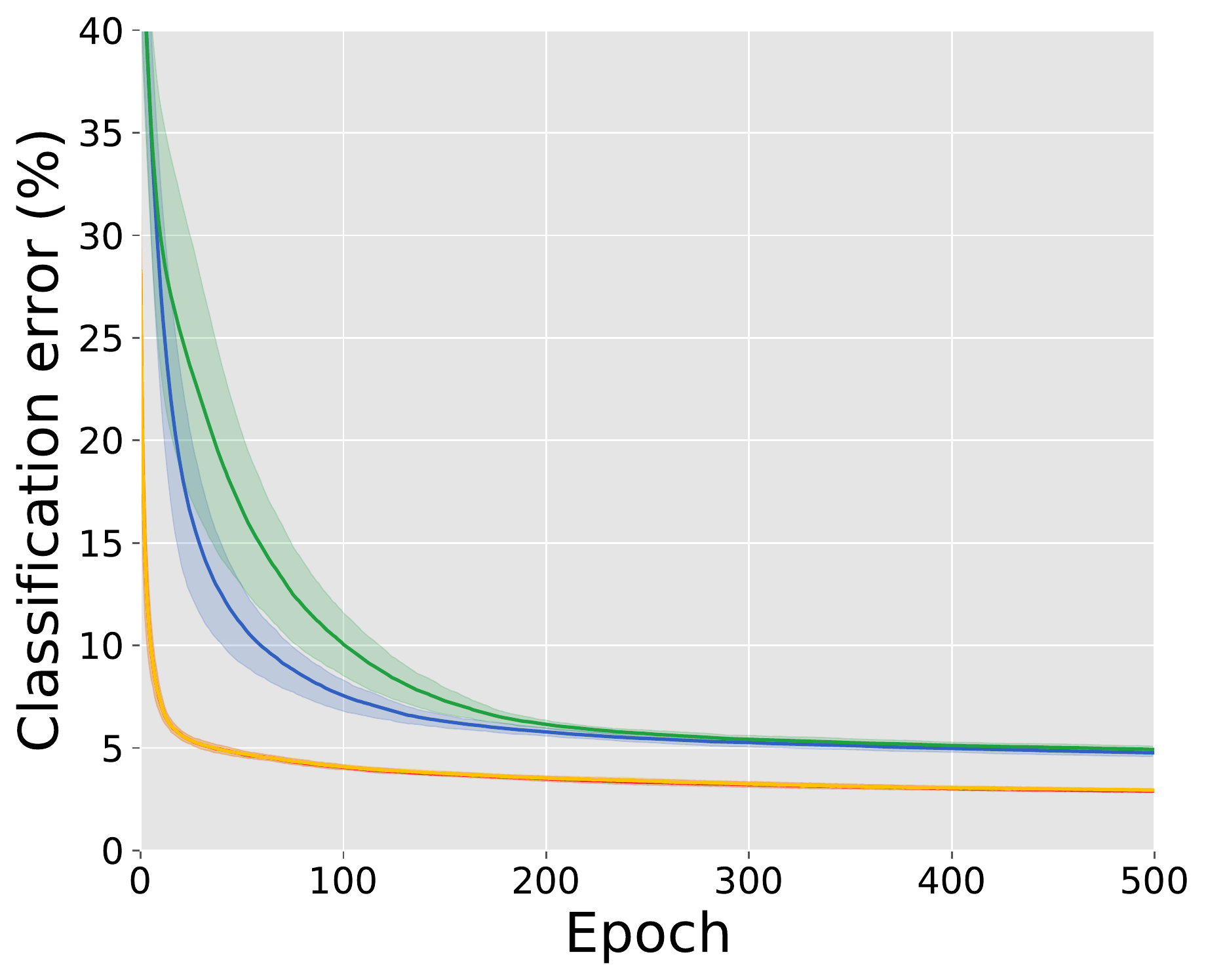}%
        \includegraphics[width=0.5\textwidth]{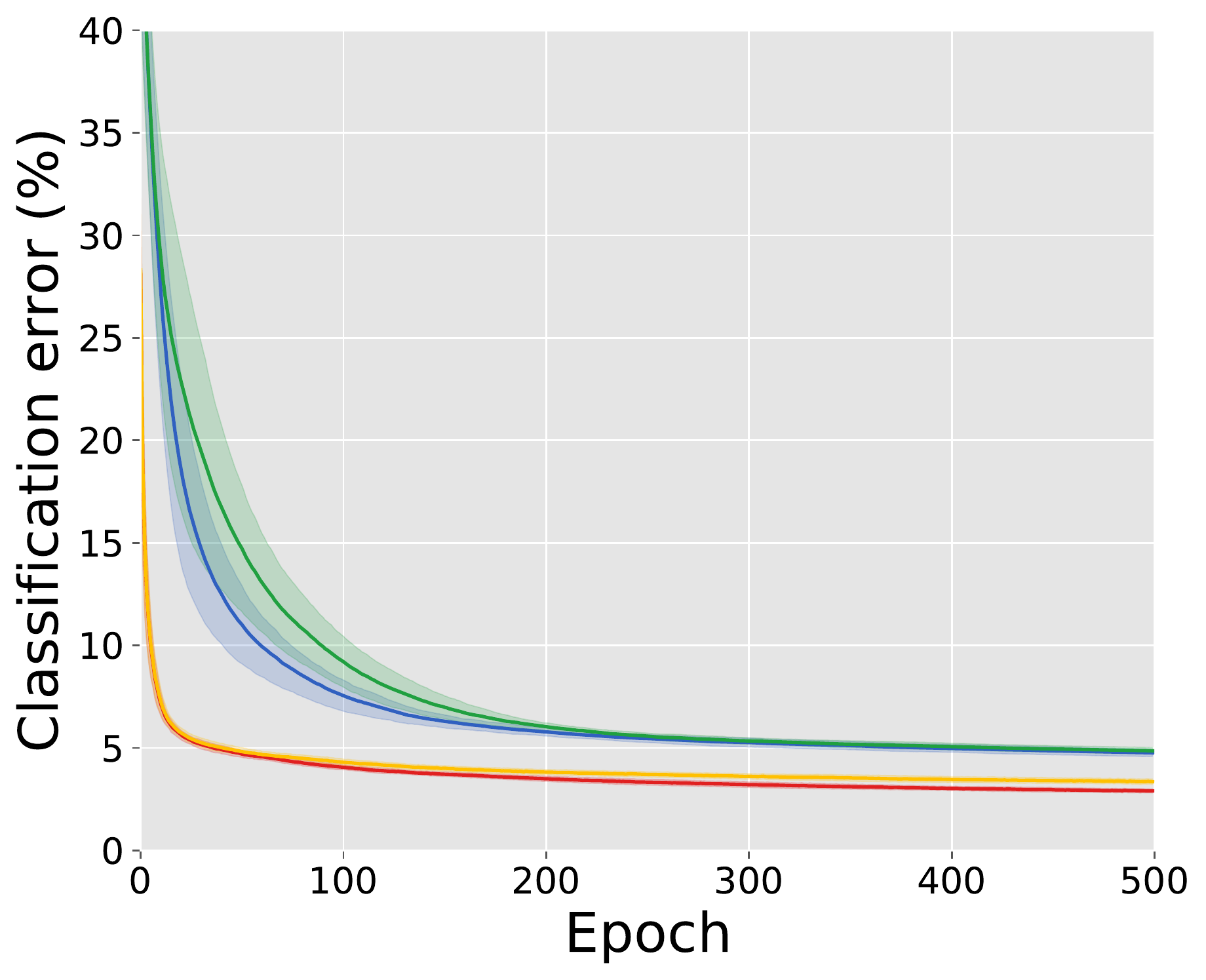}
    \end{minipage}\\
    \begin{minipage}[c]{0.1\textwidth}\flushright\small SVHN \end{minipage}\hspace{1em}%
    \begin{minipage}[c]{0.8\textwidth}
        \includegraphics[width=0.5\textwidth]{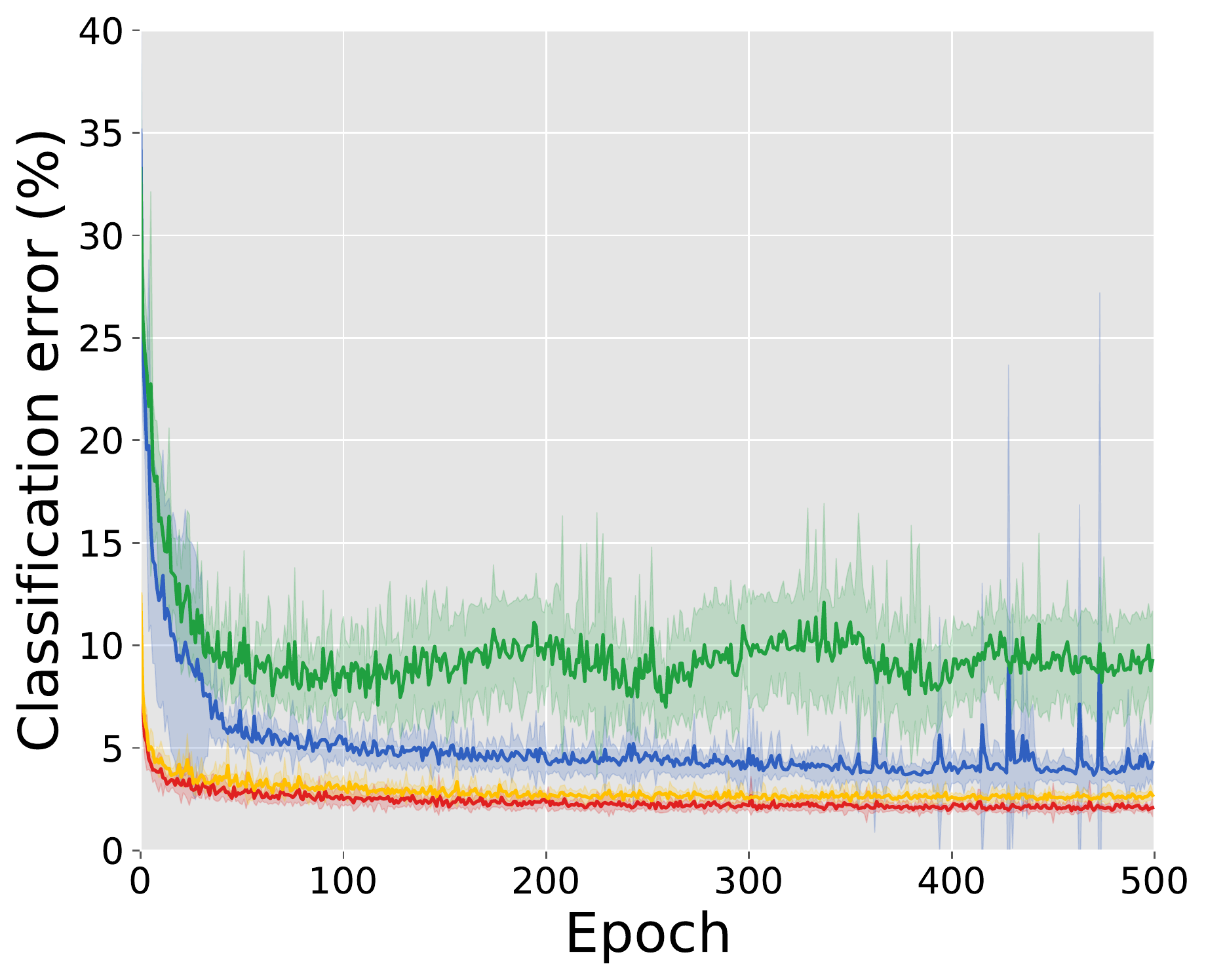}%
        \includegraphics[width=0.5\textwidth]{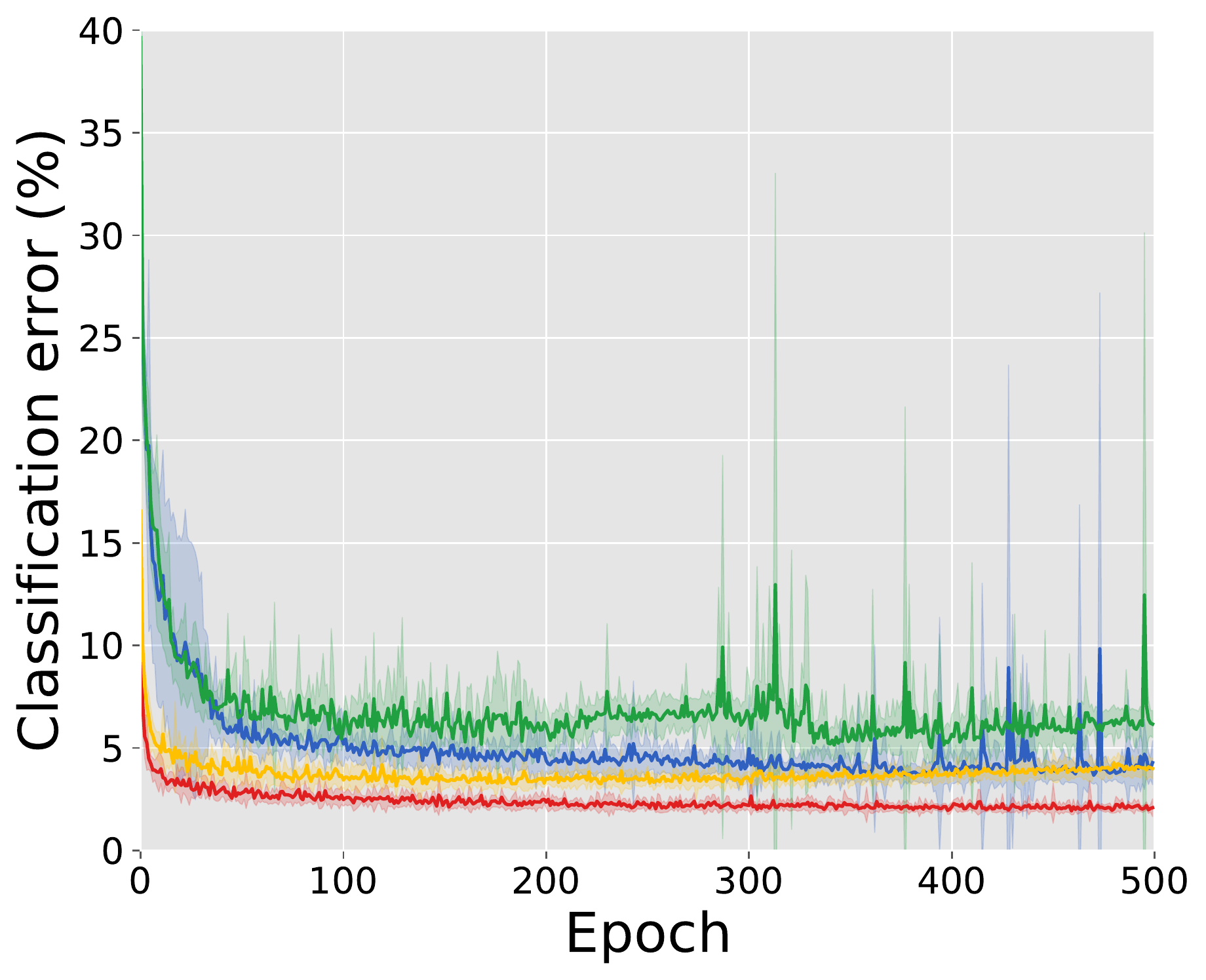}
    \end{minipage}\\
    \begin{minipage}[c]{0.1\textwidth}\flushright\small CIFAR-10 \end{minipage}\hspace{1em}%
    \begin{minipage}[c]{0.8\textwidth}
        \includegraphics[width=0.5\textwidth]{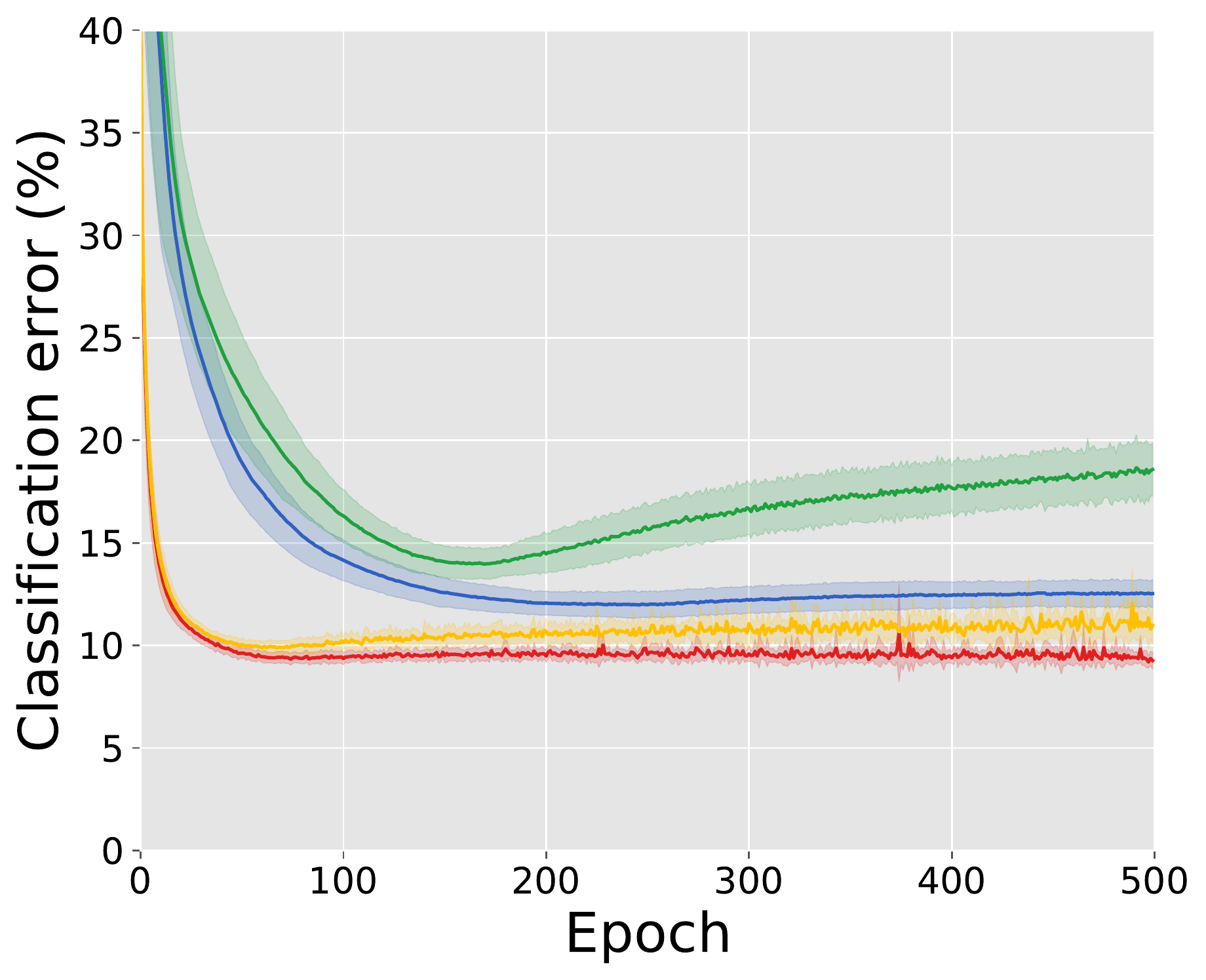}%
        \includegraphics[width=0.5\textwidth]{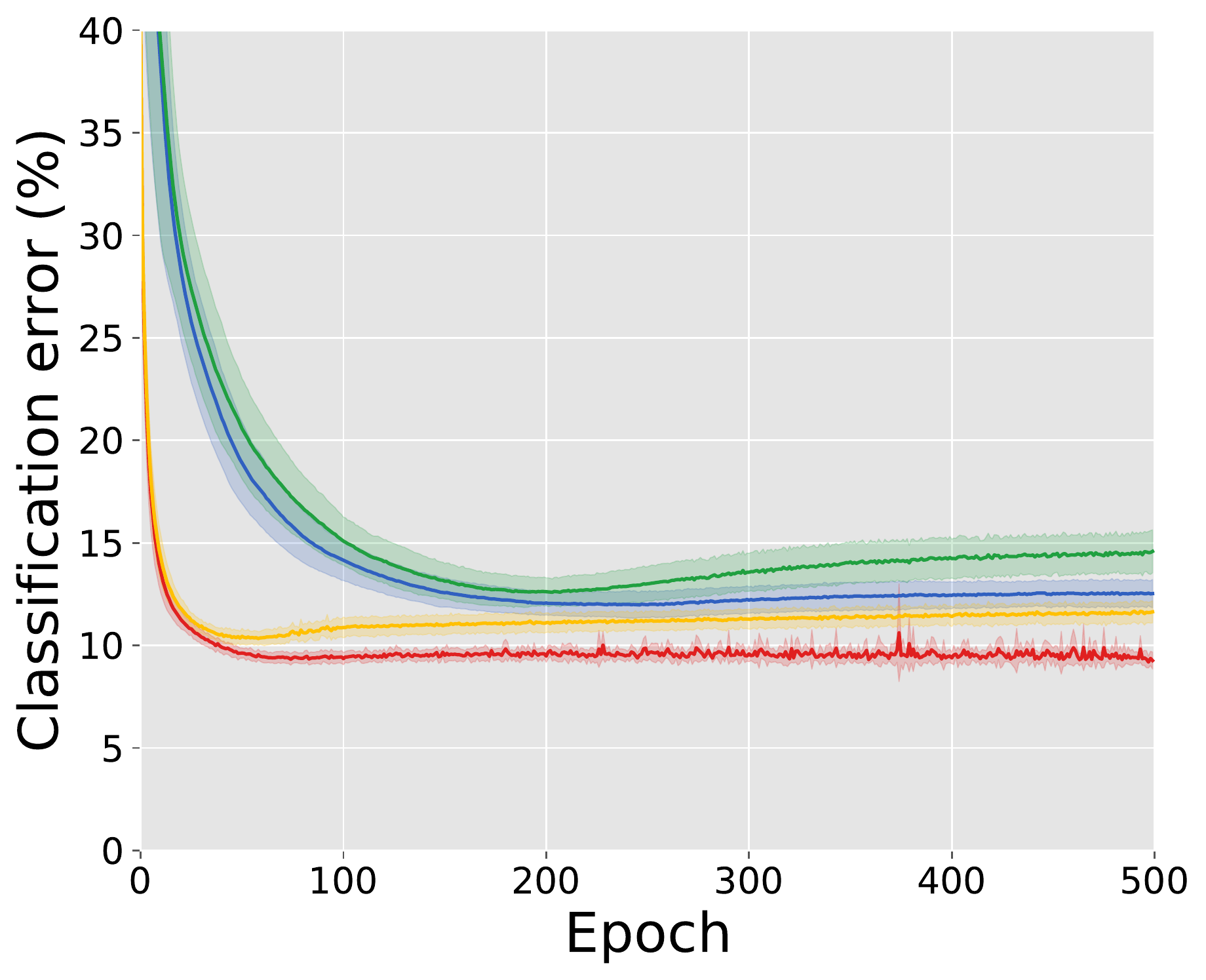}
    \end{minipage}
    \vspace{-1ex}%
    \caption{Experimental results of training deep neural networks.}
    \label{fig:performance}
    \vspace{-1em}%
\end{figure}

\subsection{Training deep neural networks on benchmarks}
\label{sec:benchmark}%

In order to analyze the proposed method, we compare it with three supervised baseline methods:
\begin{itemize}
    \item \emph{small PN} means supervised learning from 10\% L data;
    \item \emph{PN oracle} means supervised learning from 100\% L data;
    \item \emph{small PN prior-shift} means supervised learning from 10\% L data under class-prior change.
\end{itemize}
Notice that the first two baselines have L data identically distributed as the test data, which is very advantageous and thus the experiments in this subsection are merely for a proof of concept.

Table~\ref{tab:dataset} summarizes the benchmarks. They are converted into binary classification datasets; please see Appendix~\ref{subsec:setup} for details. $\Xtr$ and $\Xtr'$ of the same sample size are drawn according to Eq.~\eqref{eq:train-density}, where $\theta$ and $\theta'$ are chosen as 0.9, 0.1 or 0.8, 0.2. The test data are just drawn from $p(x,y)$.

Table~\ref{tab:dataset} also describes the models and optimizers. In this table, FC refers to \emph{fully connected neural networks}, AllConvNet refers to \emph{all convolutional net} \citep{springenberg15iclr} and ResNet refers to \emph{residual networks} \citep{he16cvpr}; then, SGD is short for \emph{stochastic gradient descent} \citep{robbins51ams} and Adam is short for \emph{adaptive moment estimation} \citep{kingma15iclr}.

Recall from Sec.~\ref{sec:erm-rev} that after the model and optimizer are chosen, it remains to determine the loss $\ell(z)$. We have compared the sigmoid loss $\ellsig(z)$ and the logistic loss $\elllog(z)=\ln(1+\exp(-z))$, and found that the resulted classification errors are similar; please find the details in Appendix~\ref{subsec:expresults}. Since $\ellsig$ satisfies \eqref{eq:cond-sym-loss} and is compatible with \eqref{eq:risk-uu-hat-sym}, we shall adopt it as the surrogate loss.

The experimental results are reported in Figure~\ref{fig:performance}, where means and standard deviations of classification errors based on 10 random samplings are shown, and the table of final errors can be found in Appendix~\ref{subsec:expresults}. When $\theta=0.9$ and $\theta'=0.1$ (cf.~the left column), UU is comparable to PN oracle in most cases. When $\theta=0.8$ and $\theta'=0.2$ (cf.~the right column), UU performs slightly worse but it is still better than small PN baselines. This is because the task becomes harder when $\theta$ and $\theta'$ become closer, which will be intensively investigated next.

\begin{figure}[t]
    \centering
    \begin{minipage}[c]{.05\textwidth}~\end{minipage}\hspace{1em}%
    \begin{minipage}[c]{0.4\textwidth}\centering\small $\theta=0.9$ \end{minipage}%
    \begin{minipage}[c]{0.4\textwidth}\centering\small $\theta=0.8$ \end{minipage}\\
    \begin{minipage}[c]{.05\textwidth}\flushright\small UU \end{minipage}\hspace{1em}%
    \begin{minipage}[c]{0.8\textwidth}
        \includegraphics[width=0.5\textwidth]{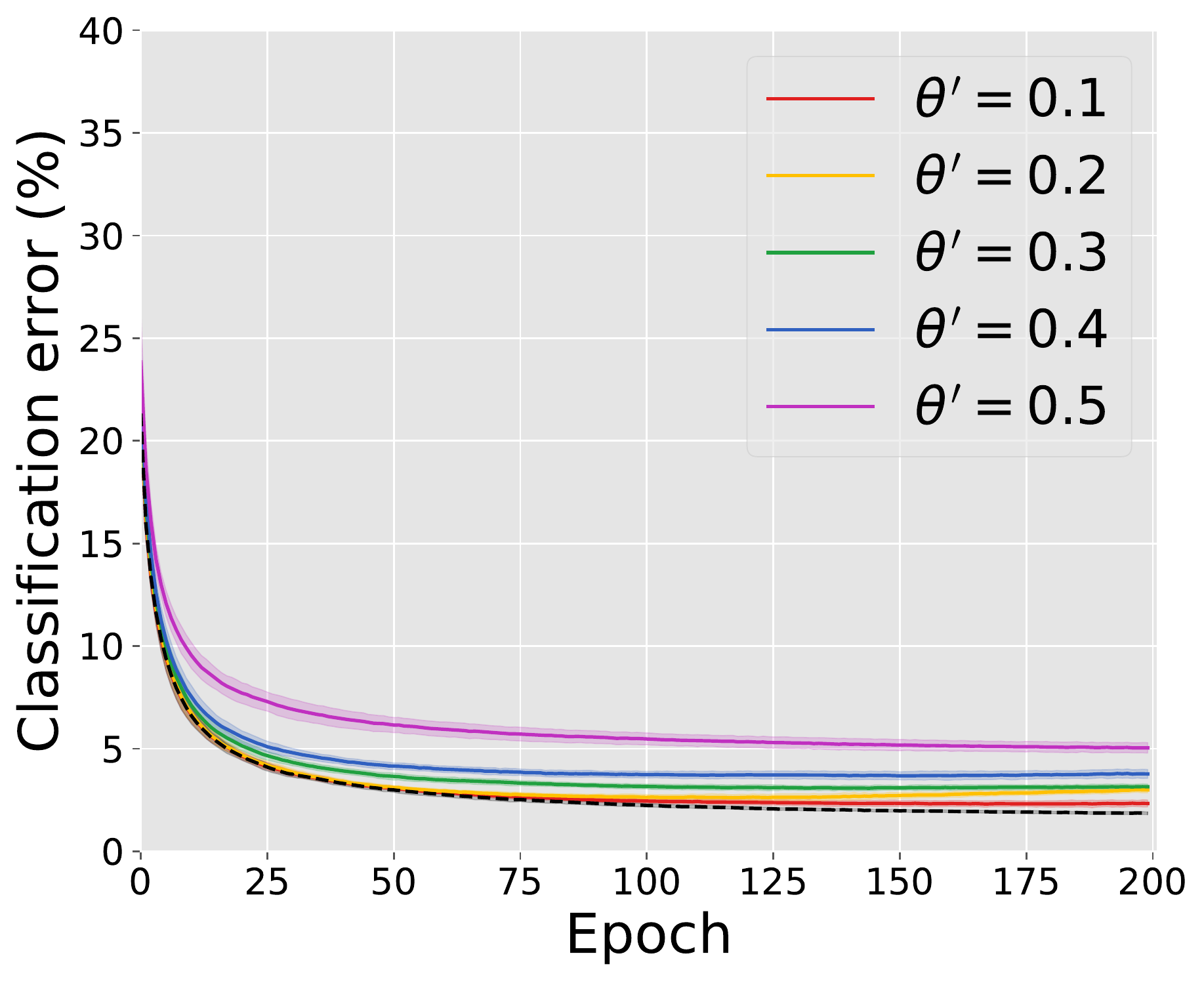}%
        \includegraphics[width=0.5\textwidth]{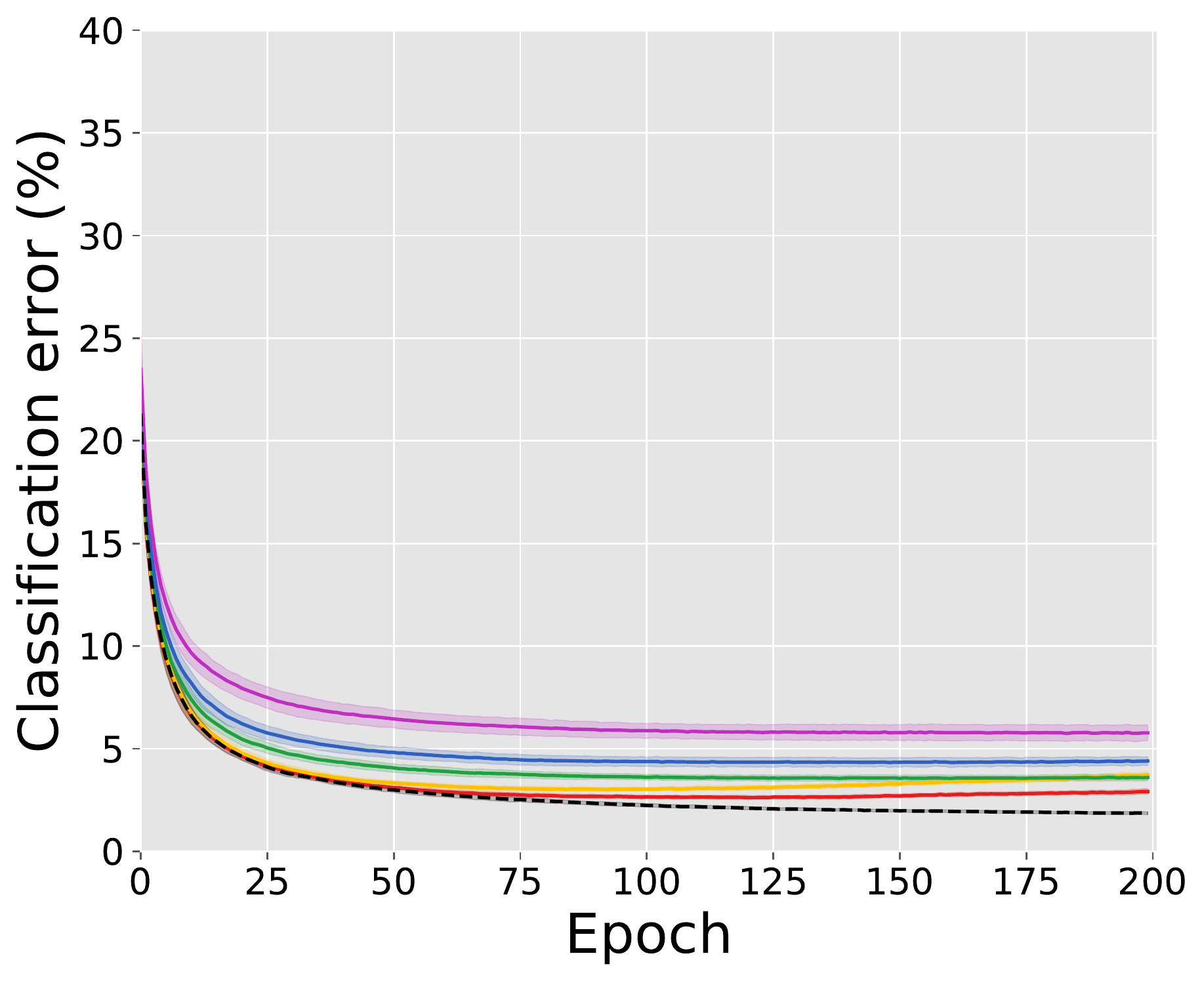}
    \end{minipage}\\
    \begin{minipage}[c]{.05\textwidth}\flushright\small CCN \end{minipage}\hspace{1em}%
    \begin{minipage}[c]{0.8\textwidth}
        \includegraphics[width=0.5\textwidth]{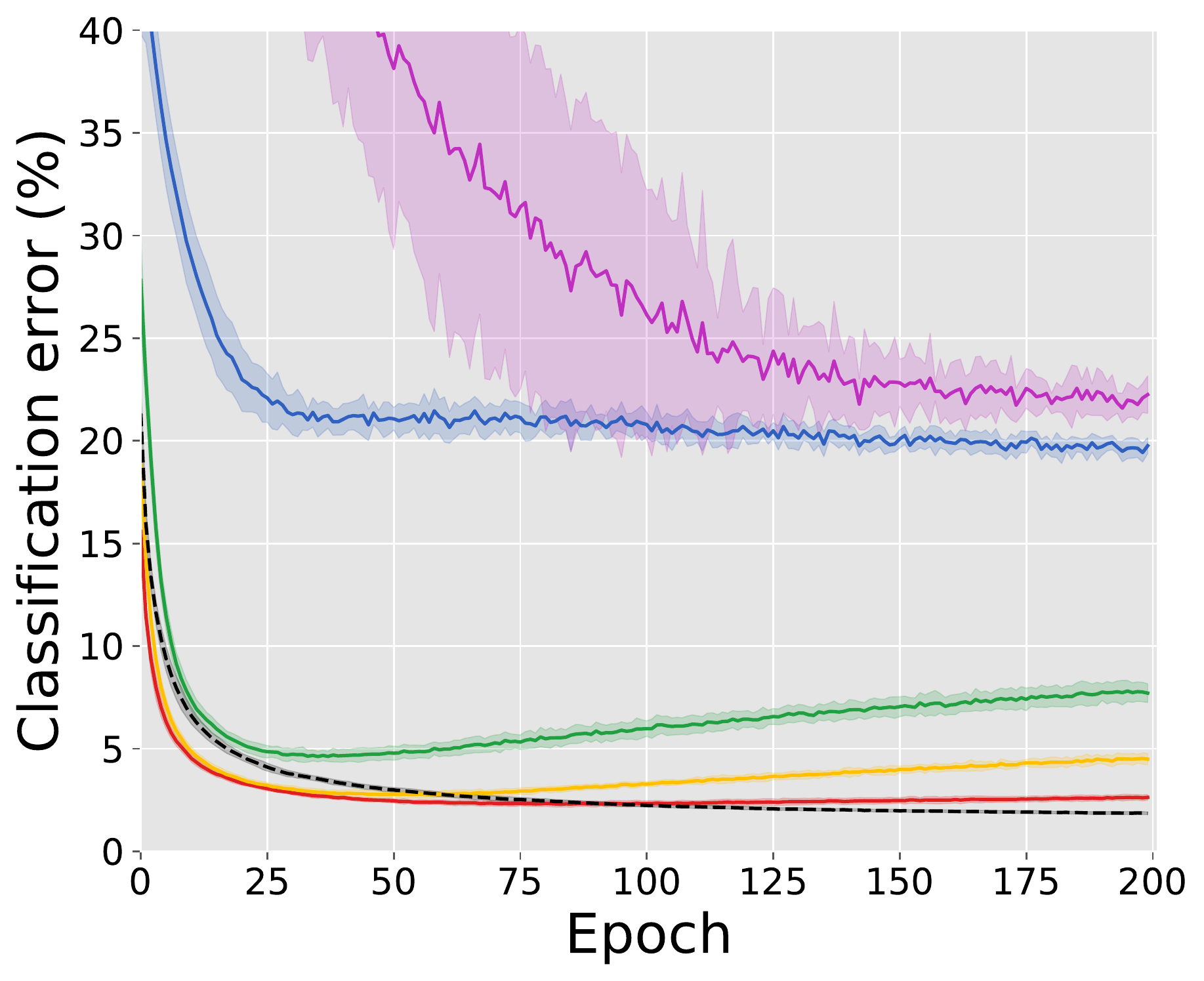}%
        \includegraphics[width=0.5\textwidth]{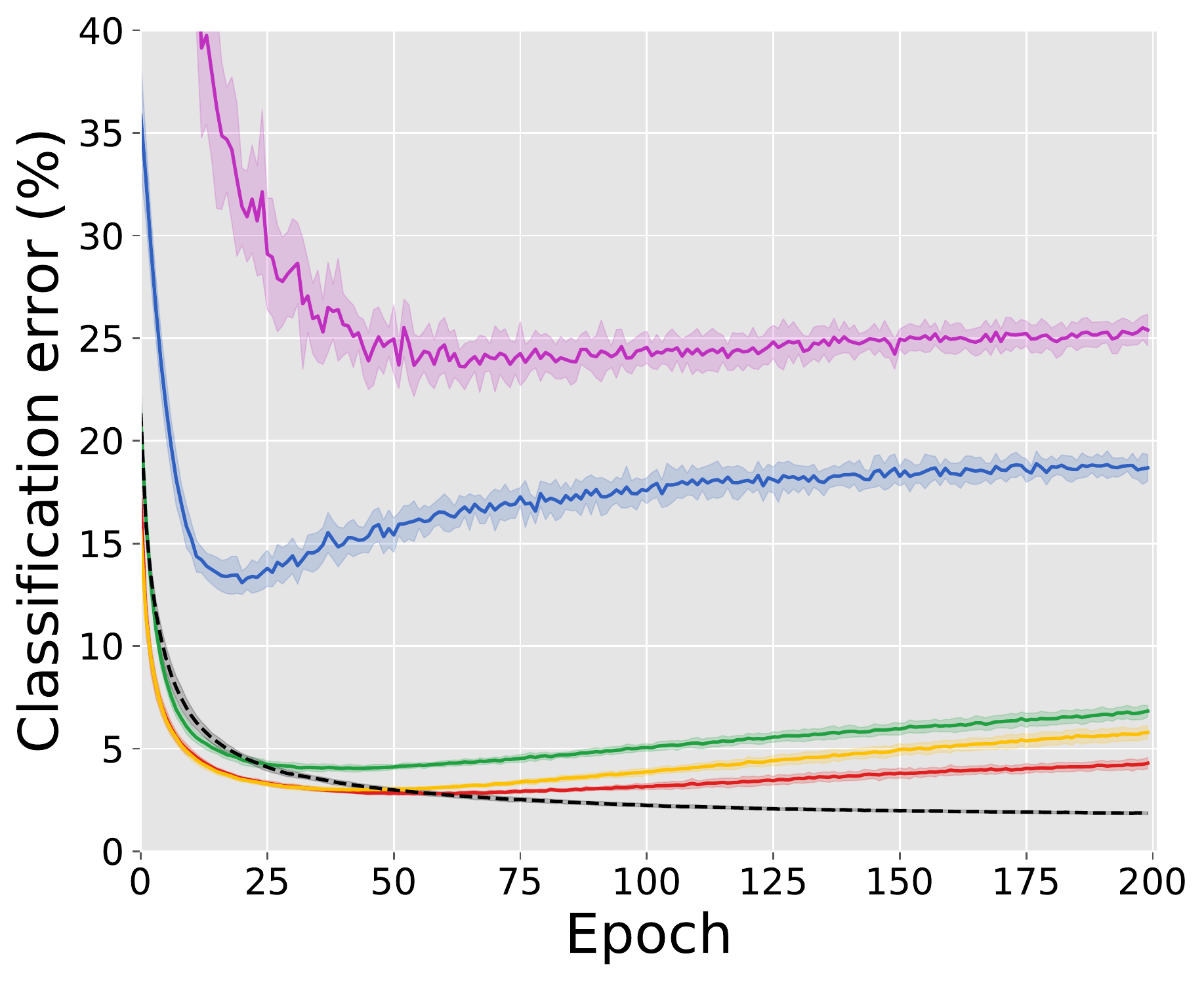}
    \end{minipage}
    \vspace{-1ex}%
    \caption{Experimental results of moving $\theta'$ closer to $\theta$ (black dashed lines are PN oracle).}
    \label{fig:near-priors}
    \vspace{1ex}%
\end{figure}

\paragraph{On the closeness of $\theta$ and $\theta'$}
It is intuitive that if $\theta$ and $\theta'$ move closer, $\Xtr$ and $\Xtr'$ will be more similar and thus less informative. To investigate this, we test UU and CCN \citep{natarajan13nips} on MNIST by fixing $\theta$ to 0.9 or 0.8 and gradually moving $\theta'$ from 0.1 to 0.5, and the experimental results are reported in Figure~\ref{fig:near-priors}. We can see that when $\theta'$ moves closer to $\theta$, UU and CCN become worse, while UU is affected slightly and CCN is affected severely. The phenomenon of UU can be explained by Theorem~\ref{thm:est-err}, where the upper bound in \eqref{eq:est-err-bound} is linear in $\alpha$ and $\alpha'$ which, as $\theta'\to\theta$, are inversely proportional to $\theta-\theta'$. On the other hand, the phenomenon of CCN is caused by stronger covariate shift when $\theta'$ moves closer to $\theta$ rather than the difficulty of the task. This illustrates CCN methods do not fit our problem setting, so that we called for some new learning method (i.e., UU).

Note that there would be strong covariate shift not only by changing $\theta$ and $\theta'$ but also by changing $n$ and $n'$. The investigation of this issue is deferred to Appendix~\ref{subsec:expresults} due to limited space.

\begin{table}[t]
    \caption{Mean errors (standard deviations) in percentage given inaccurate training class priors.}
    \label{tab:robust}
    \vspace{-1ex}%
    \begin{center}\small
        \begin{tabular*}{\textwidth}{l|c|rrrrr}
            \toprule
            Dataset & $\theta,\theta'$ & $\epsilon,\epsilon'$ = 0.8, 0.8 & $\epsilon,\epsilon'$ = 0.9, 0.9
            & $\epsilon$ = $\epsilon'$ = 1.0 & $\epsilon,\epsilon'$ = 1.1, 1.1 & $\epsilon,\epsilon'$ = 1.2, 1.2 \\
            \midrule
            \multirow{3}*{MNIST}
            & 0.9, 0.1 & 2.31~(0.16) & 2.31~(0.14) & 2.31~(0.14) & 2.32~(0.14) & 2.35~(0.14) \\
            & 0.8, 0.2 & 3.00~(0.12) & 3.00~(0.11) & 3.01~(0.10) & 3.02~(0.10) & 3.01~(0.10) \\
            & 0.7, 0.3 & 4.24~(0.23) & 4.24~(0.24) & 4.24~(0.26) & 4.25~(0.24) & 4.25~(0.25) \\
            \midrule
            \multirow{3}*{CIFAR-10}
            & 0.9, 0.1 & 10.19~(0.37) & 10.14~(0.29) & 10.14~(0.30) & 10.11~(0.34) & 10.09~(0.35) \\
            & 0.8, 0.2 & 10.84~(0.38) & 10.84~(0.40) & 10.77~(0.40) & 10.73~(0.40) & 10.73~(0.40) \\
            & 0.7, 0.3 & 12.04~(0.61) & 12.00~(0.54) & 11.92~(0.54) & 11.91~(0.53) & 11.88~(0.53) \\
            \midrule\midrule
            Dataset & $\theta,\theta'$ & $\epsilon,\epsilon'$ = 0.8, 1.2 & $\epsilon,\epsilon'$ = 0.9, 1.1
            & $\epsilon$ = $\epsilon'$ = 1.0 & $\epsilon,\epsilon'$ = 1.1, 0.9 & $\epsilon,\epsilon'$ = 1.2, 0.8 \\
            \midrule
            \multirow{3}*{MNIST}
            & 0.9, 0.1 & 2.30~(0.15) & 2.31~(0.16) & 2.31~(0.14) & 2.30~(0.13) & 2.30~(0.14) \\
            & 0.8, 0.2 & 3.00~(0.10) & 3.00~(0.12) & 3.01~(0.10) & 3.02~(0.12) & 3.01~(0.11) \\
            & 0.7, 0.3 & 4.19~(0.22) & 4.22~(0.23) & 4.24~(0.26) & 4.24~(0.25) & 4.25~(0.23) \\
            \midrule
            \multirow{3}*{CIFAR-10}
            & 0.9, 0.1 & 10.20~(0.33) & 10.15~(0.34) & 10.14~(0.30) & 10.12~(0.35) & 10.08~(0.37) \\
            & 0.8, 0.2 & 10.94~(0.46) & 10.83~(0.39) & 10.77~(0.40) & 10.75~(0.37) & 10.71~(0.43) \\
            & 0.7, 0.3 & 12.24~(0.71) & 12.05~(0.59) & 11.92~(0.54) & 11.88~(0.53) & 11.95~(0.49) \\
            \bottomrule
        \end{tabular*}
    \end{center}
\end{table}

\paragraph{Robustness against inaccurate training class priors}
Hitherto, we have assumed that the values of $\theta$ and $\theta'$ are accessible, which is rarely satisfied in practice. Fortunately, UU is a robust learning method against inaccurate training class priors. To show this, let $\epsilon$ and $\epsilon'$ be real numbers around 1, $\vartheta=\epsilon\theta$ and $\vartheta'=\epsilon'\theta'$ be perturbed $\theta$ and $\theta'$, and we test UU on MNIST and CIFAR-10 by drawing data using $\theta$ and $\theta'$ but training models using $\vartheta$ and $\vartheta'$ instead. The experimental results in Table~\ref{tab:robust} imply that UU is fairly robust to inaccurate $\vartheta$ and $\vartheta'$ and can be safely applied in the wild.

\begin{table}[t]
    \caption{Mean errors (standard deviations) in percentage of UU and state-of-the-art methods. Best and comparable methods (paired \textit{t}-test at significance level 1\%) are highlighted in boldface.}
    \label{tab:result}
    \vspace{-1ex}%
    \begin{center}\small
        \begin{tabular*}{\textwidth}{l|rrrr|rrrr}
            \toprule
            Dataset & $\pip$ & \# Sub & \# Train & $\Delta\theta$ & pSVM & BER & BER-FC & UU \\
            \midrule
            pendigits & 0.1 & 4,000 & 971 & 0.57 & 4.03~(0.27) & 5.51~(1.35) & 5.46~(1.23) & \bf1.97~(0.78) \\
            \midrule
            covertype & 0.3 & 7,400 & 3,863 & 0.80 & 14.63~(1.00) & 11.33~(0.26) & \bf5.17~(0.57) & \bf4.97~(0.48) \\
            \midrule
            MNIST & 0.5 & 11,800 & 7,139 & 0.77 & N/A & \bf3.66~(0.20) & \bf3.03~(0.25) & \bf2.87~(0.28) \\
            \midrule
            spambase & 0.7 & 3,570 & 1,139 & 0.80 & 29.18~(1.29) & \bf11.28~(1.73) & 13.98~(1.63) & \bf12.53~(1.00) \\
            \midrule
            letter & 0.9 & 5,555 & 532 & 0.60 & 15.65~(4.18) & 15.45~(6.99) & 8.45~(2.92) & \bf3.15~(0.84) \\
            \midrule
            \multirow{5}*{USPS}
            & 0.1 & 4,000 & 971 & 0.57 & 5.91~(1.52) & 12.69~(4.09) & 8.57~(2.40) & \bf3.74~(1.24) \\
            & 0.3 & 5,000 & 2,605 & 0.80 & 5.55~(0.46) & 5.36~(0.41) & \bf2.75~(0.28) & \bf2.63~(0.18) \\
            & 0.5 & 4,000 & 1,695 & 0.60 & 9.27~(0.61) & \bf7.27~(1.09) & \bf5.48~(1.33) & \bf5.52~(1.02) \\
            & 0.7 & 5,720 & 1,853 & 0.80 & 8.20~(0.73) & 7.48~(0.65) & \bf4.23~(0.50) & \bf4.43~(0.94) \\
            & 0.9 & 4,445 & 424 & 0.44 & 9.80~(2.07) & 14.13~(2.02) & 18.27~(5.17) & \bf6.20~(1.33) \\
            \bottomrule
        \end{tabular*}
        \vskip1ex%
        \begin{minipage}{0.97\textwidth}\footnotesize%
            The first five datasets come with the original codes of BER and USPS is from \url{https://cs.nyu.edu/~roweis/data.html}.
            The rows are arranged according to $\pip$. In this table, \# Sub means the amount of subsampled L training data, \# Train means the amount of generated U training data, and $\Delta\theta$ means $\theta-\theta'$. The cell N/A (in MNIST row and pSVM column) is since pSVM is based on maximum margin clustering and is too slow on MNIST. The task would be harder, if $\pip$ is closer to 0.5, or \# Train or $\Delta\theta$ is smaller.
        \end{minipage}
    \end{center}
\end{table}

\subsection{Comparison with state-of-the-art methods}

Finally, we compare UU with two state-of-the-art methods for dealing with two sets of U data:%
\footnote{We downloaded the codes by the original authors; see \url{https://github.com/felixyu/pSVM} and \url{https://akmenon.github.io/papers/corrupted-labels/index.html}.}
\begin{itemize}
    \item \emph{proportion-SVM} \citep[pSVM,][]{yu13icml} that is the best in learning from label proportions;
    \item \emph{balanced error minimization} \citep[BER,][]{menon15icml} that is the most related work to UU.
\end{itemize}
The original codes of BER train single-hidden-layer neural networks by LBFGS (which belongs to second-order optimization) in MATLAB. For a fair comparison, we also implement BER by fixing $\pip$ to 0.5 in UU, so that UU and BER only differ in the performance measure. This new baseline is referred to as BER-FC.

The information of datasets can be found in Table~\ref{tab:result}. We work on small datasets following \citet{menon15icml}, because pSVM and BER are not reliant on stochastic optimization and cannot handle larger datasets. Furthermore, in order to try different $\pip$, we first subsample the original datasets to match the desired $\pip$ and then calculate the sample sizes $n$ and $n'$ according to how many P and N data there are in the subsampled datasets, where $\theta$ and $\theta'$ are set as close to 0.9 and 0.1 as possible. For UU and BER-FC, the model is FC with ReLU of depth 5 and the optimizer is SGD. We repeat this sampling-and-training process 10 times for all learning methods on all datasets.

The experimental results are reported in Table~\ref{tab:result}, and we can see that UU is always the best method (7 out of 10 cases) or comparable to the best method (3 out of 10 cases). Moreover, the closer $\pip$ is to 0.5, the better BER and BER-FC are; however, the closer $\pip$ is to 0 or 1, the worse they are, and sometimes they are much worse than pSVM. This is because their goal is to minimize the balanced error instead of the classification error. In our experiments, pSVM falls behind, because it is based on discriminative clustering and is also not designed to minimize the classification error.

\section{Conclusions}

We focused on training arbitrary binary classifier, ranging from linear to deep models, from only U data by ERM. We proved that risk rewrite as the core of ERM is impossible given a single set of U data, but it becomes possible given two sets of U data with different class priors, after we assumed that all necessary class priors are also given. This possibility led to an unbiased risk estimator, and with the help of this risk estimator we proposed UU learning, the first ERM-based learning method from two sets of U data. Experiments demonstrated that UU learning could successfully train fully connected, all convolutional and residual networks, and it compared favorably with state-of-the-art methods for learning from two sets of U data.

\subsubsection*{Acknowledgments}

NL was supported by the MEXT scholarship No.\ 171536. MS was supported by JST CREST JPMJCR1403. We thank all anonymous reviewers for their helpful and constructive comments on the clarity of two earlier versions of this manuscript.

\bibliography{uu_erm}
\bibliographystyle{iclr2019_conference}

\clearpage
\appendix
\section{Proofs}
\label{sec:proof}%

In this appendix, we prove all theorems.

\subsection{Proof of Theorem~\ref{thm:u-rewritable}}

We prove the theorem by contradiction, namely, for any such $p(x,y)$ (with almost surely separable $\prp$ and $\prn$), for all $a,b$ and all $\theta$, we are able to find some $g$ for which \eqref{eq:u-rewritable} fails. Our argument goes from the special case of $\ell_{01}$ to the general case of $\ell$ satisfying \eqref{eq:cond-bnd-loss}.

Firstly, let $g(x)=+\infty$ identically, so that $\ell(g(x))=0$ and $\ell(-g(x))=1$. Plugging them into \eqref{eq:risk} and \eqref{eq:u-rewritable}, we obtain that
\begin{align*}
b=1-\pip.
\end{align*}
Secondly, let $g(x)=-\infty$ identically; this time $\ell(g(x))=1$ and $\ell(-g(x))=0$, and we obtain that
\begin{align*}
a=\pip.
\end{align*}
Thirdly, let $g(x)=+\infty$ over $\prp$ and $g(x)=-\infty$ over $\prn$. To be precise, define
\begin{align*}
g(x)=
\begin{cases}
+\infty, & \prp(x)>0 \textrm{ and } \prn(x)=0,\\
-\infty, & \prp(x)=0 \textrm{ and } \prn(x)>0,\\
0, & \prp(x)>0 \textrm{ and } \prn(x)>0.
\end{cases}
\end{align*}
This is possible because $g$ is arbitrary. The last case $g(x)=0$ should have a zero probability, since $\prp$ and $\prn$ are almost surely separable. Hence, we have $\ell(g(x))=0$ and $\ell(-g(x))=1$ over $\prp$ and $\ell(g(x))=1$ and $\ell(-g(x))=0$ over $\prn$, resulting in
\begin{align*}
0 &= \pip\Ep[\ell(g(X))]+(1-\pip)\En[\ell(-g(X))]\\
&= \theta\Ep[\barell(g(X))]+(1-\theta)\En[\barell(g(X))]\\
&= \theta b+(1-\theta)a.
\end{align*}
By solving this equation, we know that
\begin{align}
\label{eq:contradiction}%
\theta = \frac{a}{a-b} = \frac{\pip}{2\pip-1}.
\end{align}
Nevertheless, $0\le\theta\le1$ whereas
\begin{itemize}
    \item $\pip/(2\pip-1)<0$, if $0<\pip<1/2$;
    \item $\pip/(2\pip-1)>1$, if $1/2<\pip<1$;
    \item $\pip/(2\pip-1)$ is undefined, if $\pip=1/2$.
\end{itemize}
Therefore, \eqref{eq:contradiction} must be a contradiction, unless $\pip=0$ or $\pip=1$ which implies that there is just a single class and the problem under consideration is not binary classification.

Finally, given any $\ell$ satisfying \eqref{eq:cond-bnd-loss}, it is not difficult to verify that the three $g$ above lead to the same contradiction with exactly the same $a$, $b$ and $\theta$ by solving a bit more complicated equations. \qed

\subsection{Proof of Theorem~\ref{thm:uu-rewritable}}

Let $J(g)$ be an alias of $R(g)$ in Definition~\ref{def:uu-rewritable} serving as the learning objective, i.e.,
\begin{align}
\label{eq:uu-obj}%
J(g) = \Etp[\barellpos(g(X))]+\Etn[\barellneg(-g(X))],
\end{align}
then
\begin{align*}
J(g)  &= \Etp[a\ell(g(X))+b\ell(-g(X))] +\Etn[c\ell(-g(X))+d\ell(g(X))]\\
&= \theta\Ep[a\ell(g(X))+b\ell(-g(X))] +(1-\theta)\En[a\ell(g(X))+b\ell(-g(X))]\\
&\quad +\theta'\Ep[c\ell(-g(X))+d\ell(g(X))] +(1-\theta')\En[c\ell(-g(X))+d\ell(g(X))]\\
&= (a\theta+d\theta')\Ep[\ell(g(X))] +(b\theta+c\theta')\Ep[\ell(-g(X))]\\
&\quad +[a(1-\theta)+d(1-\theta')]\En[\ell(g(X))] +[b(1-\theta)+c(1-\theta')]\En[\ell(-g(X))].
\end{align*}
On the other hand,
\begin{align*}
J(g) = \pip\Ep[\ell(g(X))]+(1-\pip)\En[\ell(-g(X))],
\end{align*}
since $J(g)$ is an alias of $R(g)$. As a result, in order to minimize $R(g)$ in \eqref{eq:risk}, it suffices to minimize $J(g)$ in \eqref{eq:uu-obj}, if we can make
\begin{align*}
a\theta+d\theta' &= \pip,\\
b\theta+c\theta' &= 0,\\
a(1-\theta)+d(1-\theta') &= 0,\\
b(1-\theta)+c(1-\theta') &= 1- \pip.
\end{align*}
Solving these equations gives us Eq.~\eqref{eq:uu-coef}, which concludes the proof. \qed

\subsection{Proof of Theorem~\ref{thm:est-err}}

First, we show the uniform deviation bound, which is useful to derive the estimation error bound.
\begin{lemma}
    \label{thm:uni-dev}%
    For any $\delta>0$, let $C_\delta=\sqrt{(\ln2/\delta)/2}$, then we have with probability at least $1-\delta$,
    \begin{align}
        \label{eq:uni-dev-bound}%
        \sup\nolimits_{g\in\cG}|\hRuu(g)-R(g)|
        \le 2L_\ell\alpha\fR_n(\cG)+2L_\ell\alpha'\fR_{n'}'(\cG) +C_\ell C_\delta\chi_{n,n'},
    \end{align}
    where the probability is over repeated sampling of data for evaluating $\hRuu(g)$.
\end{lemma}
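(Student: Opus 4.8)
The plan is to exploit the unbiasedness of $\hRuu$ established in Theorem~\ref{thm:uu-rewritable} (which gives $\bE[\hRuu(g)]=R(g)$) and to control $\sup_{g\in\cG}|\hRuu(g)-R(g)|$ by the standard route of concentration plus symmetrization. The key structural observation is that $\hRuu(g)-R(g)$ splits additively into a term $\Delta(g)$ depending only on $\Xtr$ and a term $\Delta'(g)$ depending only on $\Xtr'$, where $\Delta(g)=\frac1n\sum_{i=1}^n[a\ell(g(x_i))+b\ell(-g(x_i))]-\Etp[a\ell(g(X))+b\ell(-g(X))]$ and $\Delta'(g)$ is the analogous centered empirical average over $\Xtr'$ carrying coefficients $c,d$. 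Each is a zero-mean empirical process, so by the triangle inequality $\sup_g|\hRuu(g)-R(g)|\le\sup_g|\Delta(g)|+\sup_g|\Delta'(g)|$, and I would bound each supremum by (i) concentrating it around its expectation and (ii) bounding that expectation by a Rademacher complexity.

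For step (i) I would apply McDiarmid's bounded-difference inequality to $\sup_g|\Delta(g)|$ viewed as a function of the i.i.d.\ sample $\Xtr$ (with $\Xtr'$ frozen). Since $\|g\|_\infty\le C_g$ keeps every argument inside $[-C_g,C_g]$ where $0\le\ell\le C_\ell$, replacing a single $x_i$ changes $a\ell(g(x_i))+b\ell(-g(x_i))$, and hence $\sup_g|\Delta(g)|$, by at most $(|a|+|b|)C_\ell/n$. The crucial bookkeeping here is that, under $\theta>\theta'$, the coefficients in \eqref{eq:uu-coef} satisfy $a>0>b$ and $c>0>d$, so $|a|+|b|=a-b=\alpha$ and $|c|+|d|=c-d=\alpha'$ exactly. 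Splitting the confidence budget as $\delta/2$ for each of the two independent sets and taking a union bound, McDiarmid yields, with probability at least $1-\delta$, the concentration terms $C_\ell\alpha\sqrt{\ln(2/\delta)/(2n)}=C_\ell\alpha C_\delta/\sqrt n$ and $C_\ell\alpha' C_\delta/\sqrt{n'}$, which sum to $C_\ell C_\delta\chi_{n,n'}$.

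For step (ii) I would bound $\bE[\sup_g|\Delta(g)|]$ by the standard symmetrization inequality \citep{mohri12FML,sshwartz14UML}, introducing Rademacher variables at the cost of a factor $2$, and then strip off the loss with the Ledoux--Talagrand contraction lemma. Because $\ell$ is $L_\ell$-Lipschitz on $[-C_g,C_g]$, the Rademacher complexity of $\{x\mapsto a\ell(g(x))+b\ell(-g(x)):g\in\cG\}$ is at most $(|a|+|b|)L_\ell\,\fR_n(\cG)=\alpha L_\ell\fR_n(\cG)$, using subadditivity of Rademacher complexity over the two loss pieces, scaling by $|a|$ and $|b|$, and contraction on each (the reflection $g\mapsto-g$ leaves $\fR_n(\cG)$ invariant). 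This gives $\bE[\sup_g|\Delta(g)|]\le2L_\ell\alpha\fR_n(\cG)$ and likewise $2L_\ell\alpha'\fR_{n'}'(\cG)$ for the primed set. Adding the outcomes of (i) and (ii) delivers \eqref{eq:uni-dev-bound}.

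The main obstacle is the coefficient and constant bookkeeping rather than any deep probabilistic inequality: one must verify the sign pattern of $a,b,c,d$ to collapse $|a|+|b|$ and $|c|+|d|$ into the clean constants $\alpha,\alpha'$, and then track how $L_\ell$, $C_\ell$, and these coefficients enter the two inequalities so that the McDiarmid contribution matches $C_\ell C_\delta\chi_{n,n'}$ and the complexity contribution matches $2L_\ell\alpha\fR_n(\cG)+2L_\ell\alpha'\fR_{n'}'(\cG)$. A secondary point needing care is the absolute value inside the supremum, which is what forces the factor $2$ in symmetrization (via a reflection/two-sided argument) and the $\ln(2/\delta)$ inside $C_\delta$ (via the $\delta/2$ union bound across the two U sets).
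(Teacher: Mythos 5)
Your proposal is correct and follows essentially the same route as the paper's proof: McDiarmid's bounded-difference inequality with increments $C_\ell\alpha/n$ and $C_\ell\alpha'/n'$ (the sign pattern $a>0>b$, $c>0>d$ giving $|a|+|b|=\alpha$ and $|c|+|d|=\alpha'$ is exactly the implicit bookkeeping behind the paper's stated increments), followed by symmetrization and Talagrand's contraction lemma, landing on identical constants. The only organizational difference is where the $\delta/2$ union bound is spent---you apply McDiarmid separately to the two independent empirical processes over $\Xtr$ and $\Xtr'$, while the paper applies it once jointly to all $n+n'$ points (then uses $\sqrt{u+v}\le\sqrt u+\sqrt v$ to reach $\chi_{n,n'}$) and reserves the $\delta/2$ for the two one-sided deviations; both yield \eqref{eq:uni-dev-bound} verbatim.
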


\begin{proof}
Consider the one-side uniform deviation $\sup\nolimits_{g\in\cG}\hRuu(g)-R(g)$. Since $0\le\ell(z)\le C_\ell$, the change of it will be no more than $C_\ell\alpha/n$ if some $x_i$ is replaced, or no more than $C_\ell\alpha'/n'$ if some $x'_j$ is replaced. Subsequently, \emph{McDiarmid's inequality} \citep{mcdiarmid89MBD} tells us that
\begin{align*}
\pr\{\sup\nolimits_{g\in\cG}\hRuu(g)-R(g)-\bE[\sup\nolimits_{g\in\cG}\hRuu(g)-R(g)]\ge\epsilon\}
\le \exp\left(-\frac{2\epsilon^2}{C_\ell^2(\alpha^2/n+\alpha'^2/n')}\right),
\end{align*}
or equivalently, with probability at least $1-\delta/2$,
\begin{align*}
\sup\nolimits_{g\in\cG}\hRuu(g)-R(g)
&\le \bE[\sup\nolimits_{g\in\cG}\hRuu(g)-R(g)]
+C_\ell(\alpha/\sqrt{n}+\alpha'/\sqrt{n'})\sqrt{(\ln2/\delta)/2}\\
&= \bE[\sup\nolimits_{g\in\cG}\hRuu(g)-R(g)]
+C_\ell C_\delta\chi_{n,n'}.
\end{align*}
By \emph{symmetrization} \citep{vapnik98SLT}, it is a routine work to show that
\begin{align*}
\bE[\sup\nolimits_{g\in\cG}\hRuu(g)-R(g)]
\le 2\alpha\fR_n(\ell\circ\cG)+2\alpha'\fR_{n'}'(\ell\circ\cG),
\end{align*}
and according to \emph{Talagrand's contraction lemma} \citep{sshwartz14UML},
\begin{align*}
\fR_n(\ell\circ\cG)\le L_\ell\fR_n(\cG),\quad
\fR_{n'}'(\ell\circ\cG)\le L_\ell\fR_{n'}'(\cG).
\end{align*}
The one-side uniform deviation $\sup\nolimits_{g\in\cG}R(g)-\hRuu(g)$ can be bounded similarly.
\end{proof}

Based on Lemma~\ref{thm:uni-dev}, the estimation error bound \eqref{eq:est-err-bound} is proven through
\begin{align*}
R(\hguu)-R(g^*)
&= \left(\hRuu(\hguu)-\hRuu(g^*)\right)
+\left(R(\hguu)-\hRuu(\hguu)\right)
+\left(\hRuu(g^*)-R(g^*)\right)\\
&\le 0 +2\sup\nolimits_{g\in\cG}|\hRuu(g)-R(g)|\\
&\le 4L_\ell\alpha\fR_n(\cG)+4L_\ell\alpha'\fR_{n'}'(\cG) +2C_\ell C_\delta\chi_{n,n'},
\end{align*}
where $\hRuu(\hguu)\le\hRuu(g^*)$ by the definition of $\hguu$. \qed

\section{Supplementary information on Figure~\ref{fig:illustration}}
\label{sec:supp_figure1}%

In the introduction, we illustrated the learning problem and the proposed method using a Gaussian mixture of two components. The details of this illustrative example are presented here.

The P component $\prp(x)$ and N component $\prn(x)$ are both two-dimensional Gaussian distributions. Their means are
\begin{align*}
\boldsymbol{\mu}_{+} = [+1,+1]^\T,\quad
\boldsymbol{\mu}_{-} = [-1,-1]^\T,
\end{align*}
and their covariance is the identity matrix. The two training distributions are created following \eqref{eq:train-density} with class priors $\theta=0.9$ and $\theta'=0.4$. Subsequently, the two sets of U training data were sampled from those distributions with sample sizes $n=2000$ and $n'=1000$. Moreover, $\prp(x)$ and $\prn(x)$ are combined to form the test distribution $p(x,y)$ with weights 0.3 and 0.7, so $\pip=0.3$.

Note that $p(x)$ changes between training and test distributions (which can be seen from Figure~\ref{fig:illustration} by comparing (c) and (d) in the left panel and the right panel). This is the key difference between UU and CCN \citep{natarajan13nips}. 

For training, a linear (-in-input) model $g(x)=\boldsymbol{\omega}^\T x+b$ where $\boldsymbol{\omega}\in\bR^{2}$ and $b\in\bR$, and a sigmoid loss $\ellsig(z)=1/(1+\exp(z))$ were used. SGD was employed for optimization, where the learning rate was 0.01 and the batch size was 128. The model just has three parameters, so for the sake of a clear comparison of different risk estimators, we did not add any regularization. For every method, the model was trained 500 epochs. The final models are plotted in Figure~\ref{fig:illustration}.

\section{Supplementary information on the experiments}

\subsection{Setup}
\label{subsec:setup}%

\paragraph{MNIST} This is a grayscale image dataset of handwritten digits from 0 to 9 where the size of the images is 28*28. It contains 60,000 training images and 10,000 test images. Since it has 10 classes originally, we used the even digits as the P class and the odd digits as the N class, respectively.

The model was FC with ReLU as the activation function: $d$-300-300-300-300-1. Batch normalization \citep{ioffe15icml} was applied before hidden layers. An $\ell_2$-regularization was added, where the regularization parameter was fixed to 1e-4. The model was trained by SGD with an initial learning rate 1e-3 and a batch size 128. In addition, the learning rate was decreased by
\begin{align*}
\frac{1}{1+\textrm{decay}\cdot\textrm{epoch}},
\end{align*}
where decay was chosen from \{0, 1e-6, 1e-5, 5e-5, 1e-4, 5e-4\}. This is a learning rate schedule built in Keras.

\paragraph{Fashion-MNIST} This is also a grayscale image dataset similarly to MNIST, but here each data is associated with a label from 10 fashion item classes. It was converted into a binary classification dataset as follows:
\begin{itemize}
    \item the P class is formed by `T-shirt', `Pullover', `Coat', `Shirt', and `Bag';
    \item the N class is formed by `Trouser', `Dress', `Sandal', `Sneaker', and `Ankle boot'.
\end{itemize}
The model and optimizer were same as MNIST, except that the initial learning rate was 1e-4.

\paragraph{SVHN} This is a 32*32 color image dataset of street view house numbers from 0 to 9. It consists of 73,257 training data, 26,032 test data, and 531,131 extra training data. We sampled 100,000 data for training from the concatenation of training data and extra training data---the extra training data were used to ensure enough training data so as to perform class-prior changes. For SVHN dataset, `0', `6', `8', `9' made up the P class, and `1', `2', `3', `4', `5', `7' made up the N class.

The model was AllConvNet \citep{springenberg15iclr} as follows.
\begin{itemize}[leftmargin=10em]
    \item[0th (input) layer:] (32*32*3)-
    \item[1st to 3rd layers:] [C(3*3, 96)]*2-C(3*3, 96, 2)-
    \item[4th to 6th layers:] [C(3*3, 192)]*2-C(3*3, 192, 2)-
    \item[7th to 9th layers:] C(3*3, 192)-C(1*1, 192)-C(1*1, 10)-
    \item[10th to 12th layers:] 1000-1000-1
\end{itemize}
where C(3*3, 96) means 96 channels of 3*3 convolutions followed by ReLU, [ $\cdot$ ]*2 means 2 such layers, C(3*3, 96, 2) means a similar layer but with stride 2, etc. Again, batch normalization and $\ell_2$-regularization with a regularization parameter 1e-5 were applied. The optimizer was Adam with the default momentum parameters ($\beta_1=0.9$ and $\beta_2=0.999$), an initial learning rate 1e-3, and a batch size 500.

\paragraph{CIFAR-10} This dataset consists of 60,000 32*32 color images in 10 classes, and there are 5,000 training images and 1,000 test images per class. For CIFAR-10 dataset,
\begin{itemize}
    \item the P class is composed of `bird', `cat', `deer', `dog', `frog' and `horse';
    \item the N class is composed of `airplane', `automobile', `ship' and `truck'.
\end{itemize}

The model was ResNet-32 \citep{he16cvpr} as follows.
\begin{itemize}[leftmargin=10em]
    \item[0th (input) layer:] (32*32*3)-
    \item[1st to 11th layers:] C(3*3, 16)-[C(3*3, 16), C(3*3, 16)]*5-
    \item[12th to 21st layers:] [C(3*3, 32), C(3*3, 32)]*5-
    \item[22nd to 31st layers:] [C(3*3, 64), C(3*3, 64)]*5-
    \item[32nd layer:] Global Average Pooling-1
\end{itemize}
where [ $\cdot$, $\cdot$ ] means a building block \citep{he16cvpr}. The optimization setup was the same as for SVHN, except that the regularization parameter was set to be 5e-3 and the initial learning rate was set to be 1e-5.

\begin{table}[t]
    \caption{Means (standard deviations) of the final classification errors in percentage corresponding to Figure~\ref{fig:performance}. Best and comparable methods (excluding PN oracle) based on the paired \textit{t}-test at the significance level 1\% are highlighted in boldface.}
    \label{tab:figure1}
    \vspace{-1ex}%
    \begin{center}\small
        \begin{tabular*}{0.9\textwidth}{l|r|rrr|r}
            \toprule
            Dataset & $\theta$, $\theta'$ & small PN & small PN prior-shift & UU & PN oracle \\
            \midrule
            \multirow{2}*{MNIST}
            & 0.9, 0.1 & 3.56~(0.13) & 6.69 (0.23) & \bf2.37~(0.17) & \multirow{2}*{1.44~(0.08)} \\
            & 0.8, 0.2 & 3.56~(0.13) & 4.56 (0.16) & \bf2.55~(0.11) & \\
            \midrule
            \multirow{2}*{Fashion-MNIST}
            & 0.9, 0.1 & 4.76~(0.17) & 4.93 (0.16) & \bf2.94~(0.07) & \multirow{2}*{2.90~(0.10)} \\
            & 0.8, 0.2 & 4.76~(0.17) & 4.86 (0.16) & \bf3.35~(0.13) & \\
            \midrule
            \multirow{2}*{SVHN}
            & 0.9, 0.1 & 4.28~(1.07) & 9.26 (2.41) & \bf2.69~(0.20) & \multirow{2}*{2.08~(0.43)} \\
            & 0.8, 0.2 & \bf4.28~(1.07) & 6.16 (0.66) & \bf3.99~(0.51) & \\
            \midrule
            \multirow{2}*{CIFAR-10}
            & 0.9, 0.1 & 12.53~(0.69) & 18.58 (1.30) & \bf10.97~(0.91) & \multirow{2}*{9.26~(0.41)} \\
            & 0.8, 0.2 & 12.53~(0.69) & 14.59 (1.05) & \bf11.64~(0.54) & \\
            \bottomrule
        \end{tabular*}
    \end{center}
\end{table}

\paragraph{Remark} In the experiments on the closeness of $\theta$ and $\theta'$ and on the robustness against inaccurate training class priors, we sampled 40,000 training data from all the training data of MNIST in order to make it feasible to perform class-prior changes.

\begin{figure}[t]
    \centering
    \begin{minipage}[c]{0.1\textwidth}~\end{minipage}\hspace{1em}%
    \begin{minipage}[c]{0.4\textwidth}\centering\small $\ellsig$ \end{minipage}%
    \begin{minipage}[c]{0.4\textwidth}\centering\small $\elllog$ \end{minipage}\\
    \begin{minipage}[c]{0.1\textwidth}\flushright\small $\theta=0.9$\\$\theta'=0.1$ \end{minipage}\hspace{1em}%
    \begin{minipage}[c]{0.8\textwidth}
        \includegraphics[width=0.5\textwidth]{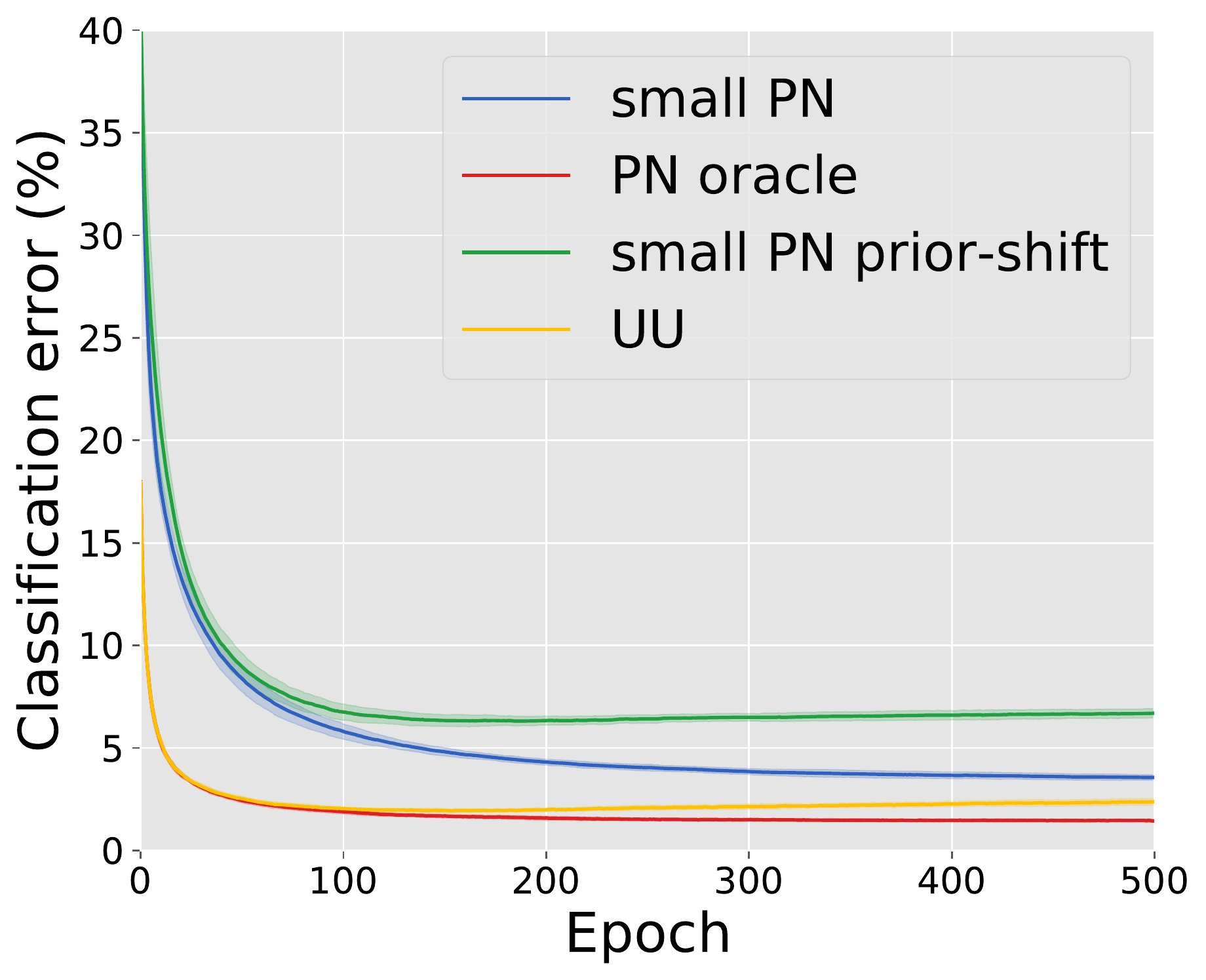}%
        \includegraphics[width=0.5\textwidth]{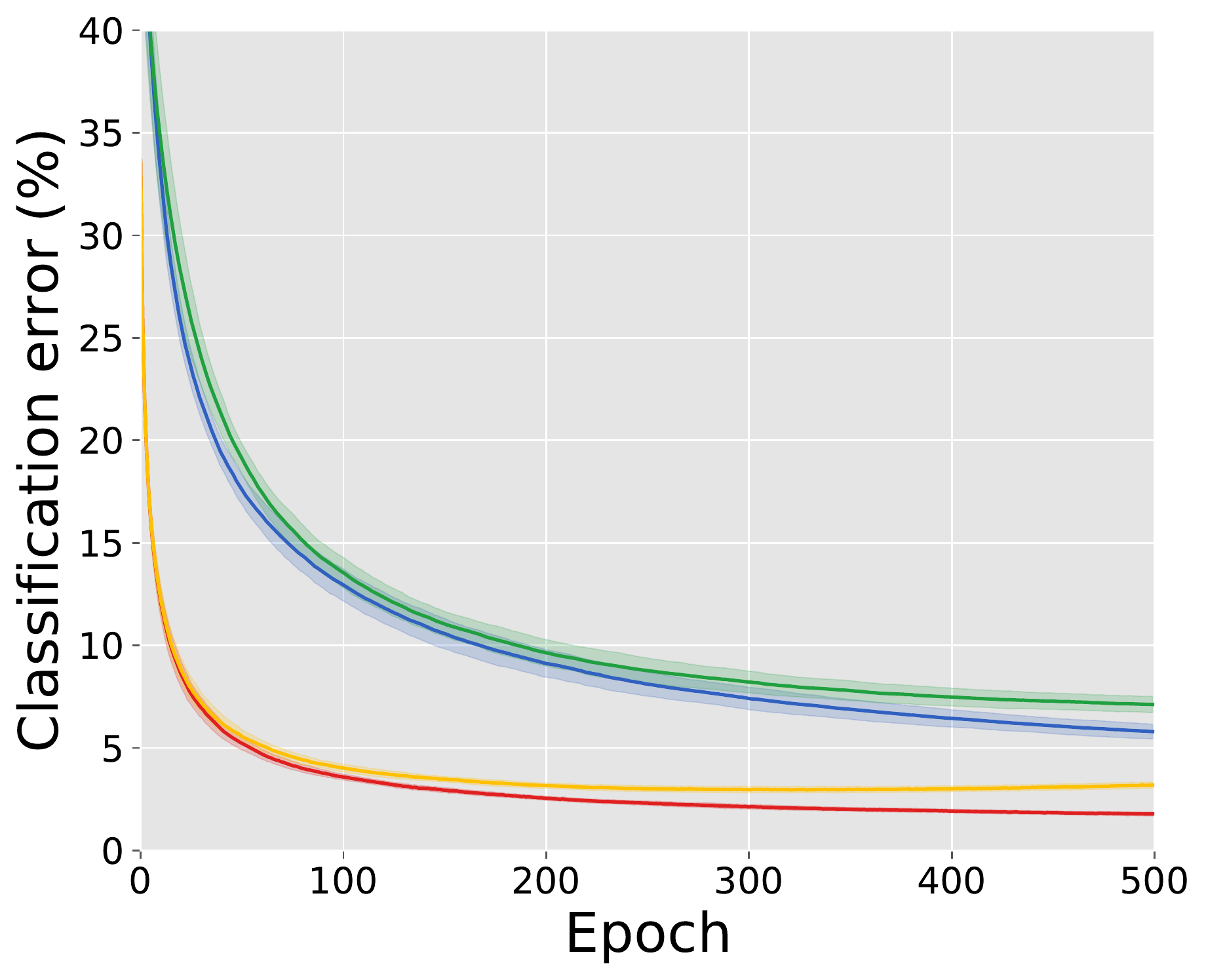}
    \end{minipage}\\
    \begin{minipage}[c]{0.1\textwidth}\flushright\small $\theta=0.8$\\$\theta'=0.2$ \end{minipage}\hspace{1em}%
    \begin{minipage}[c]{0.8\textwidth}
        \includegraphics[width=0.5\textwidth]{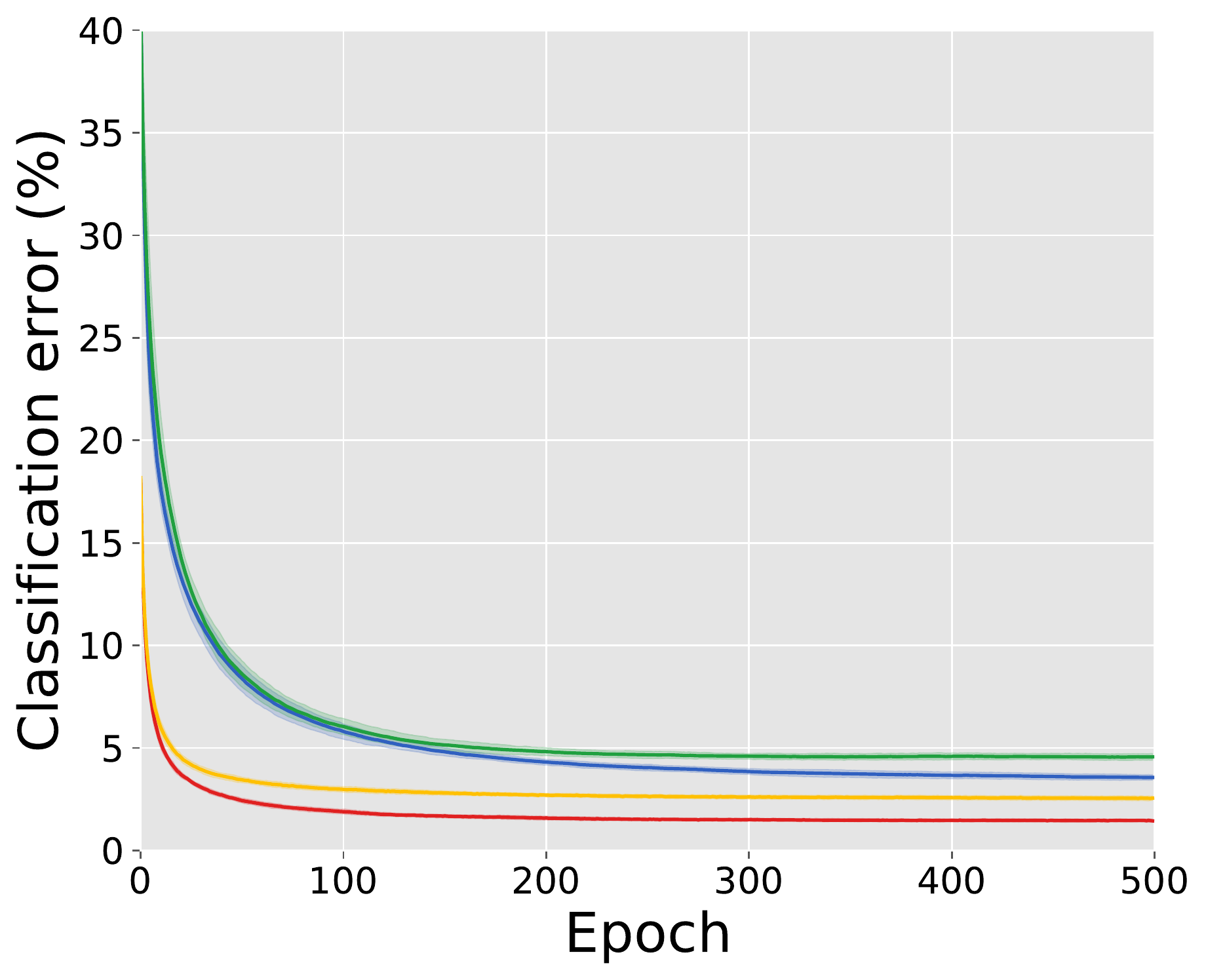}%
        \includegraphics[width=0.5\textwidth]{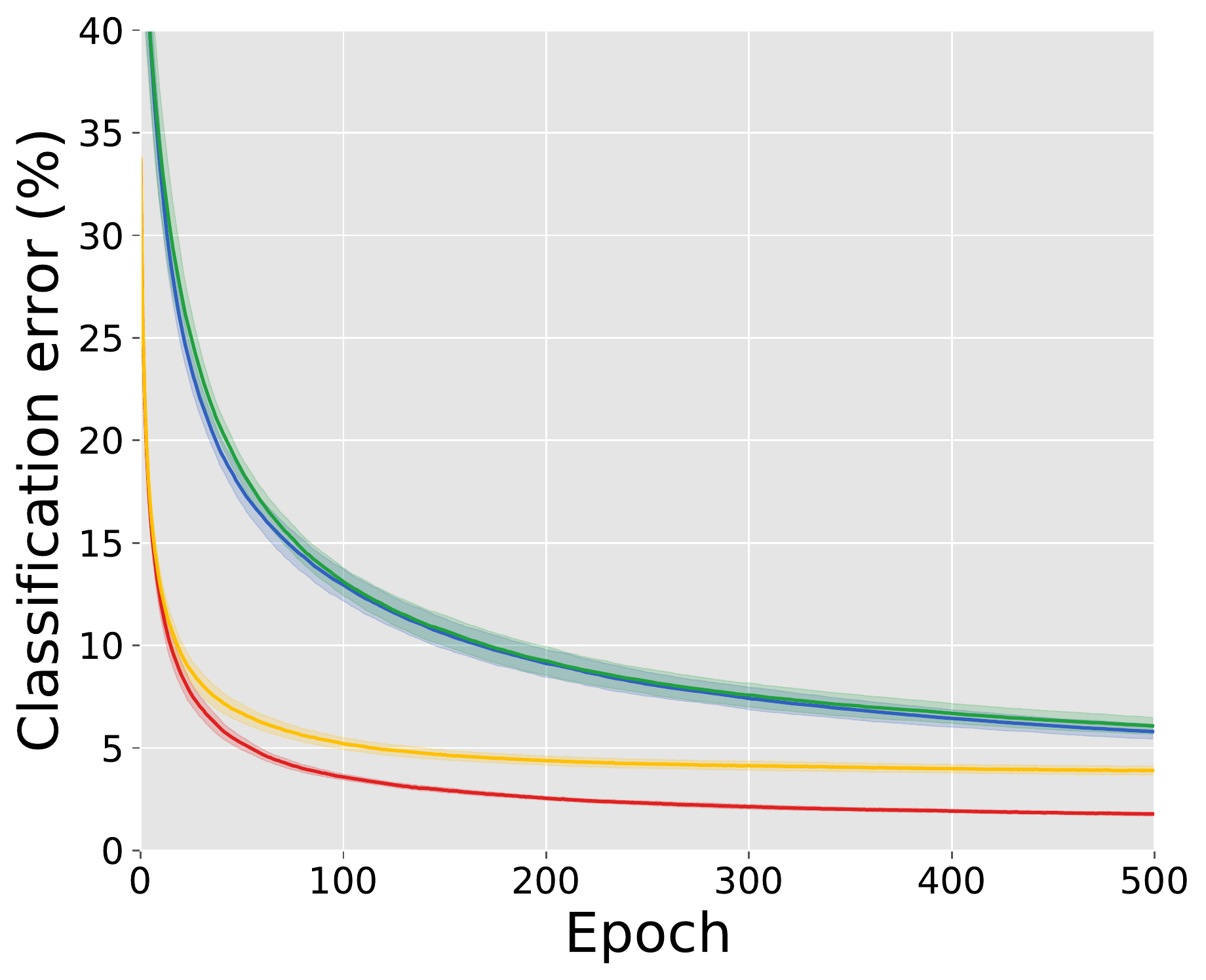}
    \end{minipage}
    \vspace{-1ex}%
    \caption{Experimental results of comparing $\ellsig$ and $\elllog$.}
    \label{fig:comp-losses}
    \vspace{1ex}%
\end{figure}

\begin{figure}[t]
    \centering
    \begin{minipage}[c]{.05\textwidth}~\end{minipage}\hspace{1em}%
    \begin{minipage}[c]{0.4\textwidth}\centering\small $\theta=0.9$ \end{minipage}%
    \begin{minipage}[c]{0.4\textwidth}\centering\small $\theta=0.8$ \end{minipage}\\
    \begin{minipage}[c]{.05\textwidth}\flushright\small UU \end{minipage}\hspace{1em}%
    \begin{minipage}[c]{0.8\textwidth}
        \includegraphics[width=0.5\textwidth]{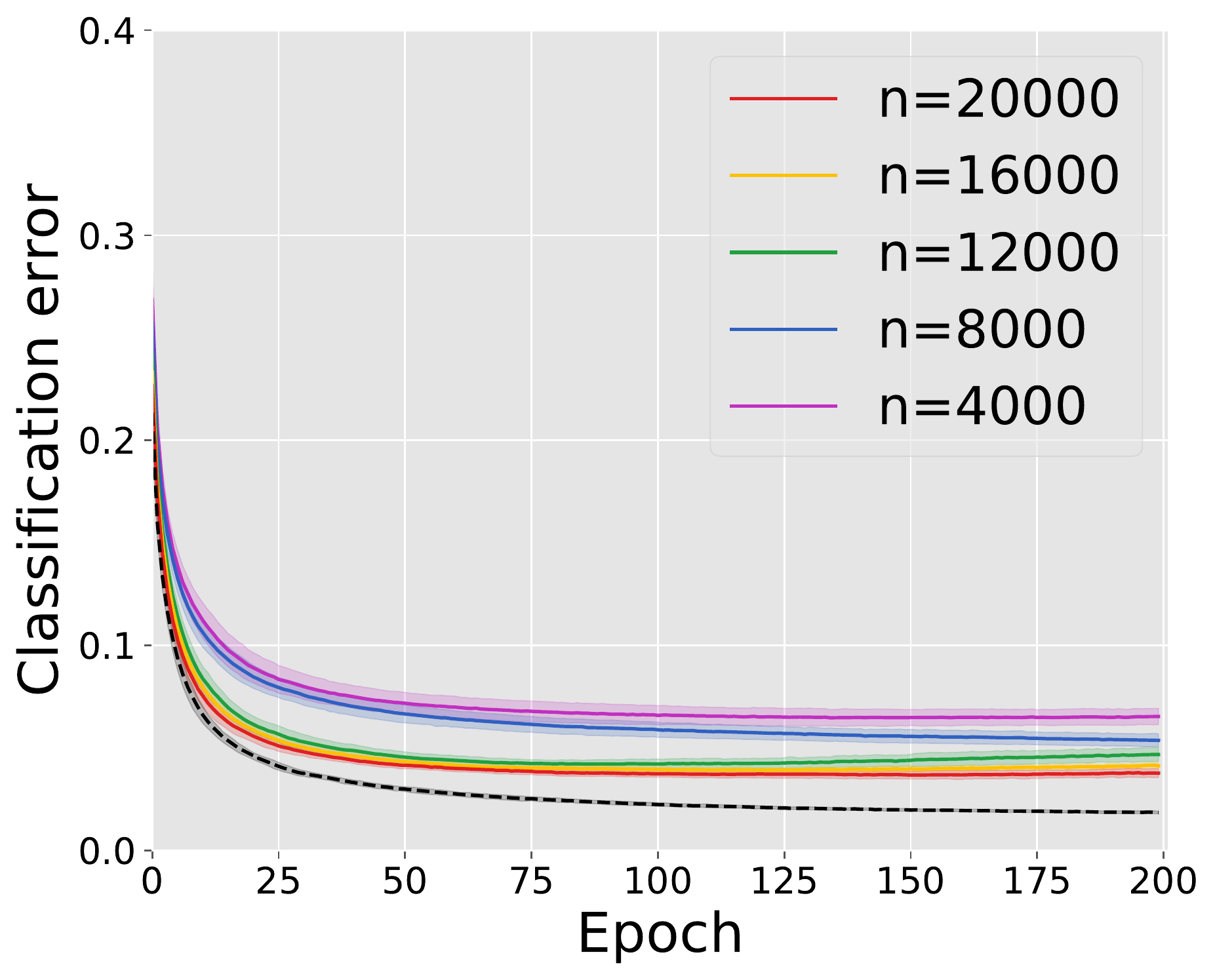}%
        \includegraphics[width=0.5\textwidth]{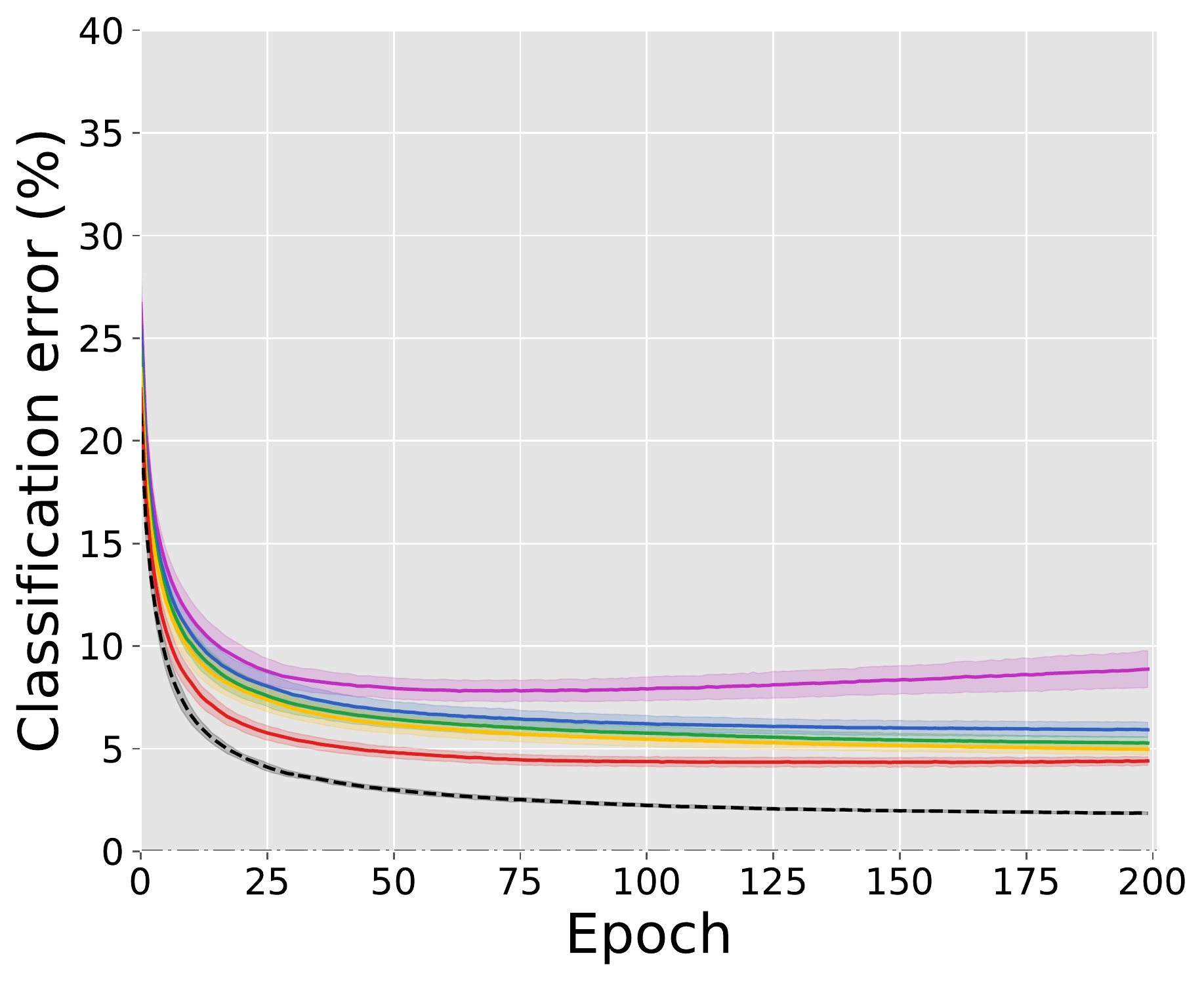}
    \end{minipage}\\
    \begin{minipage}[c]{.05\textwidth}\flushright\small CCN \end{minipage}\hspace{1em}%
    \begin{minipage}[c]{0.8\textwidth}
        \includegraphics[width=0.5\textwidth]{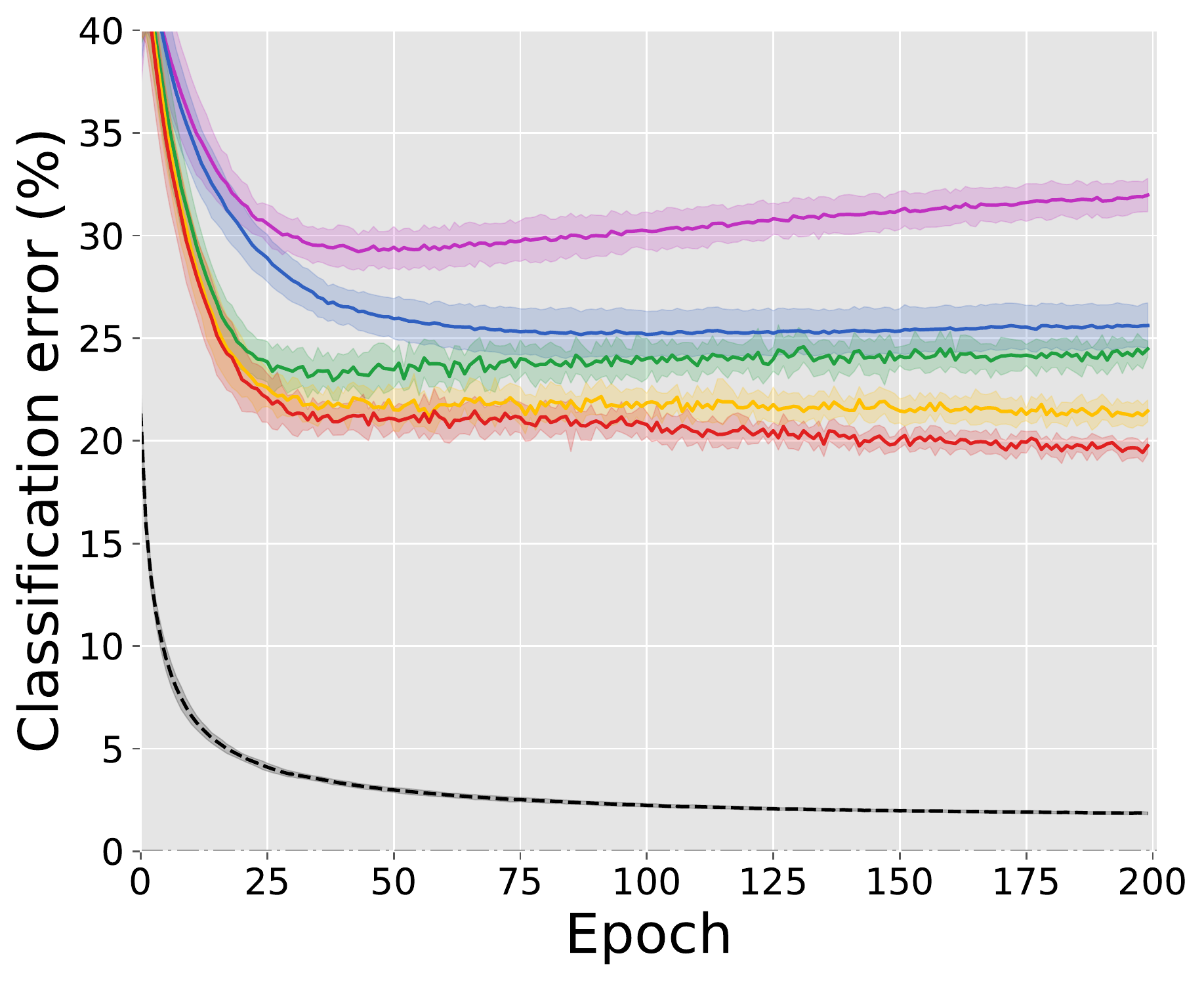}%
        \includegraphics[width=0.5\textwidth]{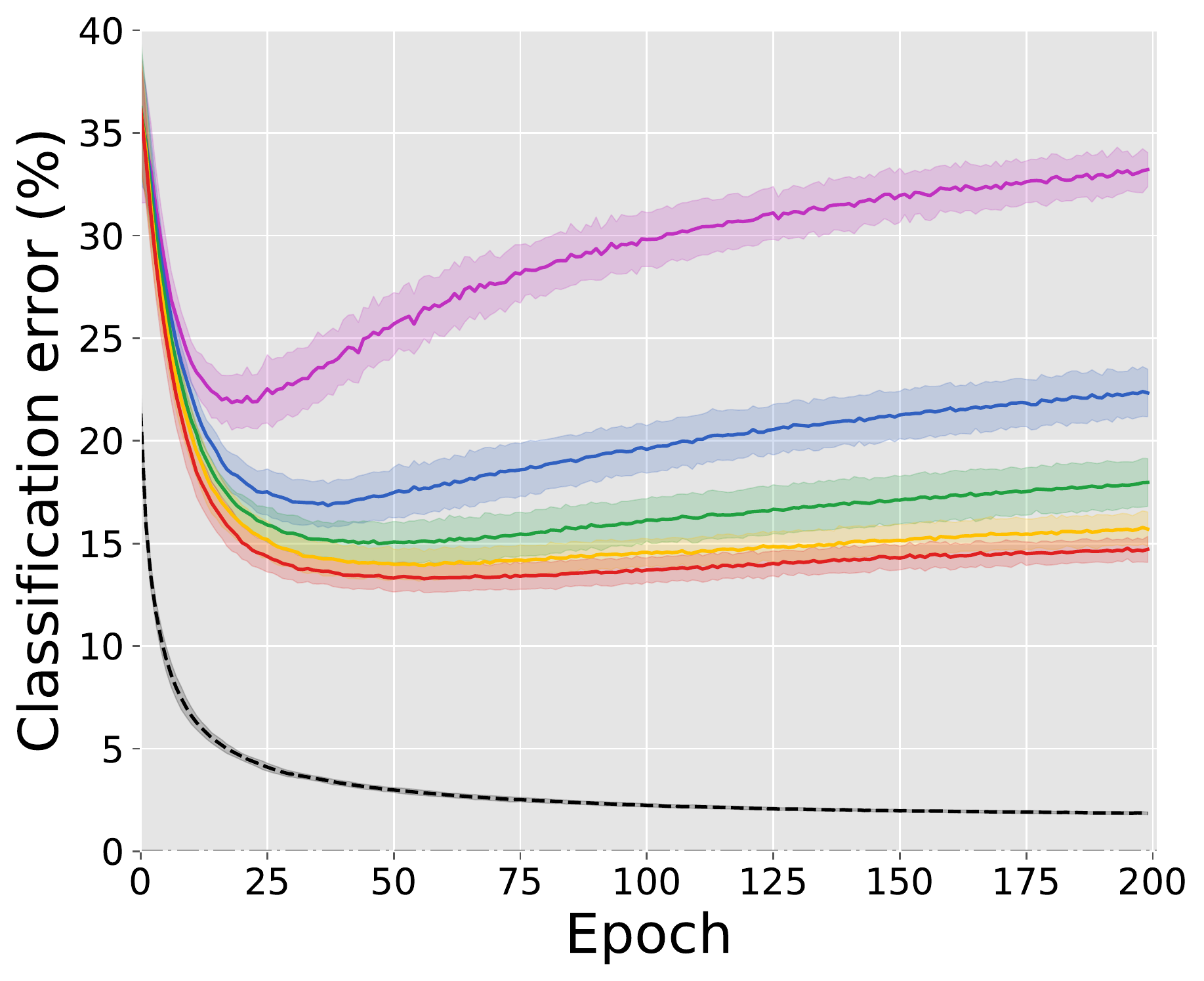}
    \end{minipage}
    \vspace{-1ex}%
    \caption{Experimental results of moving $n$ farther from $n'$ (black dashed lines are PN oracle).}
    \label{fig:diff-nb}
\end{figure}

\subsection{Results}
\label{subsec:expresults}%

\paragraph{Final classification errors} Please find in Table~\ref{tab:figure1}.

\paragraph{Comparison of different losses}
We have compared the sigmoid loss $\ellsig(z)$ and the logistic loss $\elllog(z)$ on MNIST. The experimental results are reported in Figure~\ref{fig:comp-losses}. We can see that the resulted classification errors are similar---in fact, $\ellsig(z)$ is a little better.

\paragraph{On the variation of $n$ and $n'$}
We have further investigated the issue of covariate shift by varying $n$ and $n'$. Likewise, we test UU and CCN on MNIST by fixing $n'$ to 20,000 and gradually moving $n$ from 20,000 to 4,000, where $\theta'$ is fixed to 0.4 and $\theta$ is chosen from 0.9 or 0.8. The experimental results in Figure~\ref{fig:diff-nb} indicate that when $n$ moves farther from $n'$, UU and CCN become worse, while UU is affected slightly and CCN is affected severely. Figure~\ref{fig:diff-nb} is consistent with Figure~\ref{fig:near-priors}, showing that CCN methods do not fit our problem setting.

\end{document}